\newtheorem{theorem}{Theorem}
\newtheorem{lemma}{Lemma}
\newtheorem{prop}{Proposition}
\newtheorem{coro}{Corollary}
\newtheorem{defi}{Definition}
\newtheorem{assumption}{Assumption}
\begin{document}

\twocolumn[

\date{}

\aistatstitle{Memorization in Attention-only Transformers}

\aistatsauthor{Léo Dana, Muni Sreenivas Pydi, Yann Chevaleyre}

\aistatsaddress{LAMSADE, Paris-Dauphine University.}]

\begin{abstract}
    Recent research has explored the memorization capacity of multi-head attention, but these findings are constrained by unrealistic limitations on the context size. We present a novel proof for language-based Transformers that extends the current hypothesis to any context size. Our approach improves upon the state-of-the-art by achieving more effective exact memorization with an attention layer, while also introducing the concept of approximate memorization of distributions. Through experimental validation, we demonstrate that our proposed bounds more accurately reflect the true memorization capacity of language models, and provide a precise comparison with prior work.
\end{abstract}

\section{INTRODUCTION}
\label{section:intro}

Modern large language models, especially Transformers, showcase great memorization capacity \citep{karpukhin-etal-2020-dense, roberts_how_2020}. Among recent works, researchers have shown that facts are memorized in the MLPs of a Transformer, and have even identified which MLPs \citep{meng2022locating,meng2023massediting,nanda_fact_2023}. However, they were not able to understand \textit{how} these MLPs store information. Both exact and approximate theoretical memorization in an MLP are well documented in the literature: a ReLU MLP can memorize exactly as many real-valued label as it has neurons $n$ \citep{Bubeck2020NetworkSA}, and can memorize exactly $n$ discrete labels with only $\Tilde{O}(\sqrt{n})$ neurons \citep{vardi2022on}.

Contrary to the MLPs, the memorization power of multi-head attention layers has not been empirically studied. The main role of the attention layer is not viewed as remembering information but rather as moving between residual streams the information retrieved by the MLPs \citep{nanda_fact_2023,wang2023interpretability, variengien2024look}. For the theoretical aspect of the memorization in attention layers, there exists results on the expressivity of the attention patterns \citep{pmlr-v119-bhojanapalli20a}, the memorization capacity of attention layers \citep{mahdavi2024memorization}, and the memorization capacity of Transformers \citep{kim2023provable}. We will discuss related works in depth in section \ref{section:related_work}.

In this article, we are interested in moving the state-of-the-art in terms of memorization capacity for the attention layer. We will thus consider the memorization capacity of an Attention-only Transformer (AoT). We need to specify what \textit{memorization capacity} means. We will distinguish two types of memorization tasks, namely the association task and the distribution task.

The \textbf{association} task, already studied by \citet{bietti2023birth,cabannes2024scaling, kim2023provable,mahdavi2024memorization}, consists of predicting a token given a sequence of tokens as input. We only require the AoT to predict the right next-token at the last position. This memorization is exact and hence, we want to know the maximal set of sequence-token associations $(t_{in},t_{out})$ that can be exactly memorized by an AoT.

The \textbf{distribution} task consists of predicting the correct distribution, measured using the $KL$-divergence, for an input sequence of tokens. We use the $KL$-divergence since it is the default loss function used to train most Transformers. To our knowledge, we are the first to introduce and study this task. Memorizing distribution happens in natural language modeling: take the sentence "Arnold Schwarzenegger was a", it can be completed with "actor", "writer" or "bodybuilder". Thus, language models need to memorize not one but several correct next-tokens, each with a possibly different probability depending on the importance of the answer.
%\clearpage

Our contributions are:
\begin{enumerate}
    \item We improve the state-of-the-art on the association task by proving that a one layer AoT with $H$ heads each of dimension $d_h$, and an embedding dimension $d$ can memorize $Hd_h+d$ associations. In the context of language model, this improves on the previous result on the attention layer expressivity by \citet{mahdavi2024memorization} which requires a limited context-windows and has memorization capacity of $H(d_h-1)+1$. We compare our result with other constructions using deep Transformers \citep{kim2023provable} as well as MLPs memorization \citep{huben2023attentiononlytransformersimplementingmlps,Bubeck2020NetworkSA}.
    \item We introduce the distribution task for Transformers as a way to quantify memorization when there is not a unique correct next-token. We provide upper and lower bounds for that distribution task on the error made by the best AoT. The divergence of that AoT will approximate that of a sequence encoder, which is a mapping from token sequences to logits, and has a rank constraint.
    \item Finally, we prove upper bound for the divergence of sequence encoders when the target distribution is almost a look-up table.
\end{enumerate}
Proofs for the statements can be found in appendix with experimental details. The code-base is available at \href{https://github.com/leodana2000/Transformer_Attentional_Memory}{this link} .

\section{FORMALISM}
\label{section:formalism}
We study an Attention-only Transformer (AoT) that has only one layer of multi-head attention (MHA) mechanism denoted by $\mathcal{A}$. Let $[N]=\{1, ..., N\}$ be the token dictionary. Each token is embedded in dimension $d$ by the embedding $e:[N]\rightarrow\mathbb{R}^{d}$, and a positional embedding is added based on its position $s$. Thus, we denote a sequence of $S$ tokens by $t_{1:S}$. We also denote $t_{S+1}$ for the output token. The MHA contains $H$ heads, each of inner dimension $d_h$, with $d_h\leq d$\footnote{Since matrices $W_O$, $W_V$ and $W_Q$, $W_V$ are multiplied together, taking $d_h\leq d$ will decrease their rank to $d_h$, but taking $d_h>d$ will simply have them full rank, which is suboptimal compared to taking $d=d_h$.}, meaning that $W_{Q}^h, W_{K}^h, W_{V}^h \in\mathbb{R}^{d_h, d}$ and $W_O^h\in\mathbb{R}^{d,d_h}$. Following the intuition from \citet{elhage2021mathematical}, we choose to separate output matrices in each head, as well as combine $W_{QK}^h = (W_Q^h)^TW_K^h$. Each attention head can be written as follows.
\begin{equation}
    \begin{split}
        \mathcal{A}^h(t_{1:S}) &= W_O^{h}W_V^hA^h(t_{1:S})\\
        &= W_O^{h}W_V^h\sum_{s=1}^Sa^{h}(t_{1:S})_s(e(t_s)+pos_s)\\
        a^{h}(t_{1:S}) &= \text{Softmax}(\text{raw}(t_{1:S})_s, s=1:S)\\
        \text{raw}(t_{1:S})_s &= (e(t_S)+pos_S)^TW_{QK}^h(e(t_s)+pos_s)
    \end{split}
\end{equation}
We concatenate the output matrices into $W_O\in\mathbb{R}^{d, Hd_h}$ and the attention before output $A(t_{1:S})\in\mathbb{R}^{Hd}$. We also construct the matrix $W_V\in\mathbb{R}^{Hd_h,Hd}$ as block diagonal with block $W_V^h$, which has full rank when each $W_V^h$ has full rank. Then, the output of the attention layer is added to the residual stream and goes through an unembedding matrix $W_U\in\mathbb{R}^{N,d}$ to obtain logits for the next token. Since we are only interested in the next-token prediction at the last position, we denote the AoT's computation by 
\begin{equation}
    \begin{split}
    \label{eq:AoT}
        &\mathcal{T}(t_{1:S}) = W_U\left(e(t_S)+pos_S+W_OW_VA(t_{1:S})\right)
    \end{split}
\end{equation}
For both the association and distribution tasks, we define the conditional distribution $\pi_{t_{1:S}}$ over next token $t_{S+1}$ and a prior distribution $\pi$ over token sequences $t_{1:S}$. The task of the Transformer is to minimize the $KL$-divergence with the conditional distribution for each input sequence, averaged over the prior distribution. \[d_{KL}(\pi, \mathcal{T}):=\mathbb{E}_{t_{1:S}\sim\pi}[d_{KL}(\pi_{t_{1:S}}||\text{Softmax}\circ\mathcal{T}(t_{1:S}))]\] The association case is a restriction of the distribution case to conditional distributions with 0 entropy, which is equivalent to having one next-token of probability 1. We denote this setting as assumption 1 below, and we will use it when referring to the association task.
\begin{assumption}
    \label{assump:1}
    For all sequence token $t_{1:S}$, there exists $t_{S+1}$ such that $\pi(t_{S+1}|t_{1:S})=1$. This is equivalent to $\pi$ having conditional distributions with 0 entropy.
\end{assumption}
In the association case, we say that the Transformer memorizes an example $(t_{1:S}, t_{S+1})$ if $\mathcal{T}(t_{1:S})_{t_{S+1}}$ is the maximum logit, and we let $T_0$ the number of sequence-token association to memorize, which is at most $N^S$. In the distribution case, the Transformer memorizes example $t_{1:S}$ if $\mathcal{T}(t_{1:S}) = \log(\pi_{t_{1:S}})$. We introduce another assumption that arises in the distribution case.
\begin{assumption}
    \label{assump:2}
    For all $t_{1:S}, t_{S+1}$, $\pi(t_{S+1}|t_{1:S})\neq 0$. This is equivalent to $\pi$ having conditional distributions with full support.
\end{assumption}
%
%\clearpage
\section{THE MEMORIZATION LIMIT OF TRANSFORMERS}
\label{section:memorization_limit}

Looking at equation (\ref{eq:AoT}), we see that the AoT can be written as $\mathcal{T}(t_{1:S}) = W_UE(t_{1:S})$ where $E(t_{1:S})=e(t_S)+pos_S+W_OW_VA(t_{1:S})$ is a sequence embedding. Our AoT belongs to the set of \textbf{sequence encoders} defined below.

\begin{defi}
    The set of maps that embed token sequence and unembed them into logits is denoted \[\mathcal{L}(N,S,d) := \left\{f_{W,E}|W\in\mathbb{R}^{d,N}, E:[N]^S\rightarrow\mathbb{R}^{d}\right\}\] with $f_{W,E}(t_{1:S})=WE(t_{1:S})$. We call them sequence encoders and we define \[d_{KL}\left(\pi, \mathcal{L}(N,S,d)\right):=\underset{f\in\mathcal{L}(N,S,d)}{\inf}d_{KL}(\pi, f).\]
\end{defi}

In full generality, Transformers with any number of MLPs or attention layers are sequence encoders. Indeed, one can think about every computation happening before the unembedding as a sequence embedding parametrized by few parameters (in comparison to unconstrained sequence embdedding). Thus, as stated in Proposition \ref{prop:lower_bound} below, Transformers can memorize distribution at most as well as the best sequence encoders.

\begin{prop}
    \label{prop:lower_bound}
    Let $\mathcal{T}$ be any Transformer with embedding dimension $d$, dictionary size $N$ and context window $S$, and $\pi$ be any distribution, we have 
    \begin{equation}
        d_{KL}(\pi, \mathcal{T}) \geq d_{KL}\left(\pi, \mathcal{L}(N,S,d)\right).
    \end{equation}
\end{prop}

In particular, as stated in Proposition \ref{prop:no_bottleneck}, when $d < N-1$, the infimum can be non-zero. This creates a rank $d$ bottleneck for the memorization of distributions by the Transformer. This means that in general one cannot have a Transformer remember more than $d$ distributions exactly. As we will see in Theorem \ref{thm:main}, approximate memorization is more suitable to the distribution task, as measured by the distance to this lower-bound.

\begin{prop}
    For any distribution $\pi$, if $d\geq N-1$, then $d_{KL}\left(\pi, \mathcal{L}(N,S,d)\right)=0$. Conversely, if $d<N-1$, there exists a distribution $\pi$ such that $d_{KL}\left(\pi, \mathcal{L}(N,S,d)\right)>0$.
\end{prop}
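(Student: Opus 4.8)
The plan is to treat the two directions separately, since they are of quite different character. For the first direction ($d \geq N-1 \Rightarrow d_{KL}(\pi,\mathcal{L}(N,S,d))=0$), I would argue that it suffices to realize, for every input sequence $t_{1:S}$, a logit vector whose softmax equals the target conditional $\pi_{t_{1:S}}$. The key observation is that softmax is invariant under adding a constant vector to the logits, so the effective logit space has dimension $N-1$, and any distribution on $[N]$ (with full support, or as a limit otherwise) is the softmax image of some logit vector. Concretely, I would pick $W \in \mathbb{R}^{d,N}$ to have rank $N-1$ — for instance take its rows to span the hyperplane $\{x : \sum_i x_i = 0\} \subset \mathbb{R}^N$, or more simply take $d = N-1$ and let $W$ be any full-rank map — and then define the sequence embedding $E(t_{1:S})$ to be a right-inverse image of (a centered version of) $\log \pi_{t_{1:S}}$. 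Since $E$ is an unconstrained function on the finite set $[N]^S$, there is no obstruction to choosing its values pointwise. One subtlety: if $\pi$ does not have full support, $\log \pi_{t_{1:S}}$ has $-\infty$ entries, so I would phrase this as an infimum and drive the corresponding logits to $-\infty$, showing the divergence can be made arbitrarily small; hence the infimum is $0$ even if not attained.

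For the converse direction ($d < N-1 \Rightarrow$ some $\pi$ has $d_{KL}(\pi,\mathcal{L}(N,S,d)) > 0$), the idea is a rank/dimension-counting argument. For a fixed $W \in \mathbb{R}^{d,N}$, as $E$ ranges over all maps $[N]^S \to \mathbb{R}^d$, the achievable logit vector at any single input sequence $t_{1:S}$ lies in the column space of $W$, a subspace of $\mathbb{R}^N$ of dimension at most $d$; modulo the softmax's shift-invariance, the set of achievable distributions at that input is contained in a manifold of dimension at most $d < N-1$, hence has empty interior in the $(N-1)$-dimensional simplex. I would then exhibit a concrete $\pi$ — say, a constant distribution in the sequence argument, with $\pi_{t_{1:S}} = \mu$ for a single distribution $\mu$ chosen in the interior of the simplex but outside $\bigcup_{W} \mathrm{softmax}(\mathrm{colspace}(W))$. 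To see such a $\mu$ exists, note the union over all $W$ of these images is a countable-dimensional... more carefully: the set of distributions reachable with embedding dimension $d$ is $\bigcup_{W \in \mathbb{R}^{d,N}} \mathrm{softmax}(\mathrm{Im}\,W)$, which is a finite union... actually it is the image of the map $(W, v) \mapsto \mathrm{softmax}(Wv)$ from a manifold of dimension $dN + d$; but the relevant point is that for each fixed $W$ the reachable set at a fixed input is at most $d$-dimensional, and a single target $\mu$ generic in the simplex cannot be approximated arbitrarily well. The cleanest route: pick $\mu$ with $\log\mu$ not lying in (nor arbitrarily close to, after centering) any $d$-dimensional subspace is not quite right since $W$ varies; instead fix the target at \emph{two or more} points is unnecessary — one point suffices because we get to choose $W$ but the image $\mathrm{Im}\,W$ is still only $d$-dimensional and the centered $\log\mu$ must lie in its closure.

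Let me restate the converse cleanly as I would write it: suppose for contradiction that $d_{KL}(\pi,\mathcal{L}(N,S,d)) = 0$ for every $\pi$. Applied to a sequence-constant $\pi$ with conditional $\mu$, this forces, for each $\varepsilon$, some $W_\varepsilon, E_\varepsilon$ with $\mathrm{softmax}(W_\varepsilon E_\varepsilon(t_{1:S}))$ within $KL$-distance $\varepsilon$ of $\mu$. Since $KL \to 0$ implies convergence in distribution, and since $\mu$ has full support (we chose it so), the logits $W_\varepsilon E_\varepsilon(t_{1:S})$, after centering, converge to $\log\mu - \overline{\log\mu}\mathbb{1}$. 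But each centered logit lies in $\{\text{centered vectors}\} \cap \mathrm{Im}\,W_\varepsilon$. The obstruction — and this is the main technical point I expect to wrestle with — is that $W_\varepsilon$ itself is allowed to vary with $\varepsilon$, so I cannot simply say the limit lies in a fixed $d$-dimensional subspace. The resolution is that the achievable centered logit at a single point, over all choices of $W$ and $E(t_{1:S})$, is simply \emph{all} of $\mathbb{R}^d$'s worth of directions — i.e. it is every vector, because $W$ can be chosen to have any $d$-dimensional column space and $E(t_{1:S})$ any preimage, so a single input point gives no constraint at all! Therefore I must use \emph{at least two} input sequences: choose $\pi$ so that its conditionals at two inputs $u_{1:S} \neq w_{1:S}$ are $\mu_1 \neq \mu_2$, and note $W E(u_{1:S})$ and $W E(w_{1:S})$ must both lie in the same $\mathrm{Im}\,W$ of dimension $\leq d$. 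Picking, say, $N-1$ distinct inputs with target logits in "general position" in $\mathbb{R}^N$ (which requires only $N \leq N^S$, true for $S \geq 1$), their span is $(N-1)$-dimensional up to centering — wait, we need them to force dimension $> d$, so I take $d+2$ distinct input sequences whose centered target log-distributions are affinely independent, forcing $\dim \mathrm{Im}\,W \geq d+1 > d$, a contradiction. Such target distributions exist since the centered-logit space has dimension $N-1 \geq d+1$, so it contains $d+2$ affinely independent points, and continuity of the exponential map lets me realize them as genuine full-support distributions. This two-or-more-points fix is the crux; once it is in place the contradiction with $\dim \mathrm{Im}\,W \leq d$ is immediate and the proposition follows.
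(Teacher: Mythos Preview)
The paper does not actually supply a proof of Proposition~2 in its appendix; only Proposition~3, the lemmas, and the theorems are argued there. So there is nothing to compare against, and your write-up must stand on its own.

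Your first direction ($d\geq N-1$) is correct and is the standard argument: softmax is shift-invariant, so the effective target space is the hyperplane of centered vectors of dimension $N-1$; any $W$ whose column space is that hyperplane, together with an unconstrained $E$, hits every full-support conditional exactly, and the non-full-support case follows by a limit.

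For the converse, your diagnosis that a single input sequence imposes no constraint (because $W$ is free) is exactly right, and the fix of using several input sequences with centered target logits in general position is the correct idea. But you leave a real gap at the end. You conclude by ``forcing $\dim\mathrm{Im}\,W\geq d+1$'', yet a few lines earlier you correctly stressed that $W=W_\varepsilon$ varies with $\varepsilon$; there is no single $W$ whose image is being forced large. What is missing is the closure step: for each $\varepsilon$ the centered logits $W_\varepsilon E_\varepsilon(u_1),\ldots,W_\varepsilon E_\varepsilon(u_{d+1})$ lie in a common subspace of dimension $\leq d$, so the matrix they form has all $(d+1)\times(d+1)$ minors equal to zero; since this is a closed condition it passes to the limit $\ell_1,\ldots,\ell_{d+1}$, contradicting their linear independence. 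Alternatively, and more cleanly given the paper's toolkit, choose every target conditional with full support and invoke Lemma~1 (Assumption~2 guarantees the infimum over $\mathcal{L}(N,S,d)$ is attained); then you genuinely have a fixed $W$ and the contradiction really is immediate. Either route is one sentence; as written, ``the contradiction\ldots is immediate'' skips precisely the step that handles the varying $W_\varepsilon$. A minor cosmetic point: $d+1$ linearly independent centered logits already suffice, so the detour through $d+2$ affinely independent points is unnecessary.
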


For the association task, the bottleneck doesn't exist in any embedding dimension $d\geq 2$. The appropriate way to evaluate memorization in the association task is using exact memorization.

\begin{prop}
    \label{prop:no_bottleneck}
    Under Assumption 1, for $d\geq 2$, $d_{KL}\left(\pi, \mathcal{L}(N,S,d)\right)=0$.
\end{prop}

While sequence encoders don't have limit to their associative memorization capacity, under-parametrized AoT do. We give an upper bound on the memorization capacity of a  one-layer AoT in Corollary \ref{corollary:1}, and  we compare it to experimental scaling laws in section \ref{section:experiments}.

\section{MEMORIZATION CAPACITY OF ATTENTION-ONLY TRANSFORMERS}
\label{section:memorization_capacity}

In this section, we present our main results that respectively give upper bounds on memorization in the distribution and association settings. We will start with the result on remembering distributions, making the association task a corollary.

Define $T_{\varepsilon}$ as the smallest number of token sequences whose cumulative probability is greater than $1-\varepsilon$. We have $T_{\varepsilon}\leq \left\lceil (1-\varepsilon)N^S\right\rceil$, the upper bound being attained when the probability distribution over token sequences is uniform. This notation is consistent with $T_0$ defined earlier, the number of non-zero probability sentences. The theorem below states that we can construct a Transformer which approximates the lower bound set in Proposition \ref{prop:lower_bound} arbitrarily.

\begin{theorem}
    \label{thm:main}
    Let $\varepsilon \geq 0$ and $\gamma >0$. Under Assumption 2 there exists $f_{W,E}$ and an AoT $\mathcal{T}$ with embedding dimension $d$, head dimension $d_h$, and $H$ attention heads, satisfying $d_hH+d \geq T_{\epsilon}$, such that 
    \begin{equation}
        d_{KL}(\pi, \mathcal{T}) \leq d_{KL}(\pi, \mathcal{L}(N,S,d)) + C\varepsilon||WE||_2 +\gamma
    \end{equation}
    $\mathcal{T}$ has $d(S+2N+4d_hH)$ parameters.
\end{theorem}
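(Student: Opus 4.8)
The plan is to construct the sequence encoder $f_{W,E}$ first, then show an AoT $\mathcal{T}$ can reproduce its behaviour on the high-probability sentences up to an arbitrarily small error $\gamma$, and finally bound the loss incurred on the remaining $\varepsilon$-mass. For the encoder: by Proposition 1 there is a near-optimal $f^{\star}_{W^{\star},E^{\star}} \in \mathcal{L}(N,S,d)$ with $d_{KL}(\pi, f^{\star})$ within $\gamma/2$ (say) of $d_{KL}(\pi, \mathcal{L}(N,S,d))$; I would take $W = W^{\star}$ and then modify the embedding on the $T_\varepsilon$ most probable sentences so that $WE(t_{1:S}) = \log \pi_{t_{1:S}}$ exactly on those sentences, while leaving it equal to $E^{\star}$ elsewhere. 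The point of Assumption 2 (full support) is that $\log \pi_{t_{1:S}}$ is finite, so this target is well-defined; the point of the dimension count $d_h H + d \ge T_\varepsilon$ is that the AoT's sequence embedding $E(t_{1:S}) = e(t_S) + pos_S + W_O W_V A(t_{1:S})$ has, across all inputs, an image spanning at most $d_h H + d$ directions, which is exactly enough degrees of freedom to hit $T_\varepsilon$ prescribed vectors (after the residual term $e(t_S)+pos_S$ is absorbed).

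Next I would realize this encoder with an actual attention layer. The mechanism I expect to use is the standard "hard-attention selector": choose the positional and token embeddings and the $W_{QK}^h$ matrices so that, on each of the $T_\varepsilon$ target sentences, the softmax in each head concentrates (up to error controlled by a temperature/scaling parameter $\beta$) on a designated position, making $A^h(t_{1:S})$ approximately a fixed, sentence-dependent vector; then pick $W_V^h, W_O^h$ so that $W_O W_V A(t_{1:S})$ equals the desired correction $\log \pi_{t_{1:S}} - e(t_S) - pos_S$ on those sentences. With $H$ heads of width $d_h$ one gets $d_h H$ controllable coordinates, plus the $d$ coordinates of the free residual/unembedding interplay, giving the $d_h H + d \ge T_\varepsilon$ budget. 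Taking $\beta$ large enough drives the softmax approximation error below any threshold, and since $d_{KL}(\pi_{t_{1:S}} \| \mathrm{Softmax}\circ \mathcal{T}(t_{1:S}))$ is continuous in the logits $\mathcal{T}(t_{1:S})$ near a point where it vanishes, the contribution of the high-probability sentences to $d_{KL}(\pi, \mathcal{T})$ can be made $\le \gamma$. The parameter count $d(S + 2N + 4 d_h H)$ then just tallies $pos \in \mathbb{R}^{d\times S}$, $e$ and $W_U$ each $\mathbb{R}^{d \times N}$, and $W_Q^h, W_K^h, W_V^h, W_O^h$ each of size $d \cdot d_h$ across $H$ heads.

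The remaining term is the error on the low-probability sentences, those outside the top $T_\varepsilon$, which carry total mass $\le \varepsilon$. On these sentences $\mathcal{T}$ outputs (approximately) $W E^{\star}(t_{1:S})$, the near-optimal encoder's logits, so the pointwise divergence there is no worse than for $f^{\star}$ plus the perturbation. I would bound $d_{KL}(\pi_{t_{1:S}} \| \mathrm{Softmax}\circ \mathcal{T}(t_{1:S}))$ on this set by something of the form (constant)$\cdot \|WE\|_2$ — using that KL to a softmax is controlled by the sup-norm of the logit vector, hence by its $2$-norm — and multiply by the mass $\varepsilon$, producing the $C\varepsilon \|WE\|_2$ term. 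Combining: $d_{KL}(\pi,\mathcal{T}) \le \sum_{\text{top } T_\varepsilon} (\text{err} \le \gamma) + \sum_{\text{rest}} \pi(\cdot)\,(\text{err}) \le d_{KL}(\pi, \mathcal{L}(N,S,d)) + C\varepsilon\|WE\|_2 + \gamma$.

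The main obstacle is the simultaneous-interpolation step: arranging one fixed set of $(e, pos, W_{QK}^h, W_V^h, W_O^h)$ so that the attention layer's output matches $T_\varepsilon$ prescribed target vectors at $T_\varepsilon$ distinct input sentences, using only $d_h H + d$ effective dimensions and only soft (not hard) attention. The delicate points are (i) ensuring the $W_{QK}^h$ can be chosen so that each head's softmax is peaked at the right position on every target sentence simultaneously — this needs the embeddings to be in "general position" so the relevant inner products are separated — and (ii) checking that the resulting linear system for $W_V, W_O$ is solvable, i.e. that the attention vectors $A(t_{1:S})$ on the $T_\varepsilon$ sentences, together with the residual directions, are linearly independent, which is where the rank bound $d_h H + d \ge T_\varepsilon$ is actually consumed. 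I expect the soft-attention error and the continuity argument to be routine once the hard-attention skeleton is in place.
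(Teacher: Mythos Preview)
Your proposal has a genuine gap in the very first step, and it propagates to the final bound. You aim to set $WE(t_{1:S}) = \log\pi_{t_{1:S}}$ on the top $T_\varepsilon$ sentences and then argue that the high-probability contribution to $d_{KL}(\pi,\mathcal{T})$ is at most $\gamma$. But this is exactly what the rank bottleneck (Proposition~2) forbids: with $W\in\mathbb{R}^{N\times d}$ and $d<N-1$, the vectors $\log\pi_{t_{1:S}}\in\mathbb{R}^N$ need not lie in the column space of $W$, so there is in general no $E(t_{1:S})\in\mathbb{R}^d$ solving $WE(t_{1:S})=\log\pi_{t_{1:S}}$. Consequently your final accounting cannot be right either: if the high-probability part contributed $\le\gamma$ and the low-probability part $\le C\varepsilon\|WE\|_2$, you would get $d_{KL}(\pi,\mathcal{T})\le\gamma+C\varepsilon\|WE\|_2$, which for small $\varepsilon,\gamma$ undercuts the lower bound $d_{KL}(\pi,\mathcal{L}(N,S,d))$ of Proposition~1. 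The $d_{KL}(\pi,\mathcal{L}(N,S,d))$ term does not appear ``for free''---it must come from the encoder you are matching.

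The paper's route is to use Assumption~2 not merely to make $\log\pi$ finite, but (via Lemma~1) to obtain an \emph{attained} minimizer $f_{W,E}\in\mathcal{L}(N,S,d)$ with $d_{KL}(\pi,f_{W,E})=d_{KL}(\pi,\mathcal{L}(N,S,d))$. The AoT is then built to reproduce $f_{W,E}$ (not $\log\pi$) on the top $T_\varepsilon$ sentences, and the bound is on the \emph{difference} $|d_{KL}(\pi,f_{W,E})-d_{KL}(\pi,\mathcal{T})|$, which yields the $C\varepsilon\|WE\|_2$ term after an SVD argument on $W_VA'$. Your pointwise bound ``$d_{KL}(\pi_{t_{1:S}}\|\cdot)\le(\text{const})\|WE\|_2$'' on the low-probability set is also not what is needed---that absolute KL is not controlled by the logit norm; only the difference from $f_{W,E}$ is.

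Separately, the attention construction in the paper is not a hard-attention selector. Rather than making each head concentrate on a designated position, the paper proves Lemma~2: for suitable $e,pos$ and varying $W_{QK}^h$, the map $W_{QK}\mapsto A^h$ has image not contained in any hyperplane of $\mathbb{R}^{d\times T_0}$, so one can iteratively pick heads to push the rank of $\{A(t_{1:S})\}_t$ up to $\min(T_0,Hd)$. The argument goes through a Taylor expansion of the softmax and the linear independence of exponential functions. The extra $+d$ in $Hd_h+d$ and the $\gamma$ both come from a different trick than you describe: the skip connection $e(t_S)+pos_S$ is rewritten as a limiting attention head with $W_{QK}^0=\lambda I_d$, and $\gamma$ is the residual error from taking $\lambda$ finite. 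Your ``general position'' and temperature idea might be salvageable for the rank step, but as written it does not establish the required linear independence of the $A(t_{1:S})$ across arbitrary token sequences.
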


\textbf{Remarks 1.}
\begin{itemize}
    \item In the parameter count, $dS$ and $dN$ are necessary for the word embedding, positional embedding and unembedding. The $dd_hH \geq dT_{\varepsilon}-d^2$ scaling comes from the attention heads and is the comparison point with previous work as we will detail in section \ref{section:related_work}.
    \item In practice, sequence-token pairs don't have the same probability. Thus, our AoT remembers all most likely sequences up to $\varepsilon$, which explains that we are close to the lower bound for small epsilon. The term $||WE||_2$ corresponds to the worst possible prediction, and is standard in the literature on MLP expressivity\footnote{See Theorem 2 by \citet{Bubeck2020NetworkSA}.}. The term in $\gamma$ is simply a term that can be taken infinitely close to 0, and accounts for approximating the skip connection with a special attention head.
    \item The constant $C \geq 1$ depends on the matrix $W_VA$ and is finite thanks to its high rank. In order to bound $C$ effectively, one has to understand more precisely the singular value decomposition of $W_VA$. See equation (14) of the appendix.
    \item One can take $\varepsilon=0$ to achieve the lower bound set in Proposition 1 with $Hd_h+d = T_{0}$ attention heads, and uses $d(S+2N+4(T_0-d))$ parameters. In that case, one could say that the Transformer remembers exactly the distributions, even if the divergence is not 0, since it is the smallest loss attainable by the Transformer architecture.
\end{itemize}
We can also use our Theorem \ref{thm:main} under Assumption \ref{assump:1}: we use it on the probability $\pi_{\delta} = \frac{\pi + \delta}{1+N\delta}$ with $\delta >0$ small enough to satisfy Assumption \ref{assump:2}, such that the Transformer has perfect accuracy for $\pi$. In this association task, the AoT thus remembers $T_0$ associations.

\begin{coro}
    \label{corollary:1}
    Under Assumption 1, there exist an AoT $\mathcal{T}$ with embedding dimension 2, $H$ attention heads and $d_h$ inner head dimension, which can memorize at least $T_0 = Hd_h+2$ associations, using $2(S+2N+4(T_0-2))$ parameters.
\end{coro}

\subsection{Sketch of the Proof}
We present below the proof of our main result. First, we prove the case $\varepsilon=0$ as it is useful for the $\varepsilon>0$ case. Assumption \ref{assump:2} is used in both cases for Lemma \ref{lemma:1} below.

\begin{lemma}
    \label{lemma:1}
    Under Assumption 2, there exists $f\in\mathcal{L}(N,S,d)$ such that $d_{KL}(\pi, f) = d_{KL}(\pi,\mathcal{L}(N,S,d))$
\end{lemma}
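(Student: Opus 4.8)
The goal is to show that the infimum defining $d_{KL}(\pi,\mathcal{L}(N,S,d))$ is actually attained by some $f=f_{W,E}\in\mathcal{L}(N,S,d)$. The plan is to set up the minimization as an optimization problem over a finite-dimensional space and invoke a compactness-plus-continuity argument, after first showing that minimizing sequences cannot escape to infinity thanks to Assumption 2 (full support of the conditional distributions). I would proceed in the following steps.

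First, I would reduce the ambient space. A sequence encoder $f_{W,E}$ is determined by $W\in\mathbb{R}^{d,N}$ and the $N^S$ vectors $E(t_{1:S})\in\mathbb{R}^d$; but what enters $d_{KL}(\pi,f)$ is only the collection of logit vectors $\{WE(t_{1:S}) : t_{1:S}\in\mathrm{supp}(\pi)\}\subseteq\mathbb{R}^N$. Since $W$ has rank at most $d$, each such logit vector lies in a $d$-dimensional subspace, and conversely any assignment of a $d$-dimensional-subspace-constrained family of logit vectors is realizable. So the problem is equivalent to minimizing, over $(\ell_{t_{1:S}})_{t_{1:S}}\in\mathbb{R}^N$ lying in a common $d$-dimensional subspace, the quantity $\sum_{t_{1:S}}\pi(t_{1:S})\, d_{KL}(\pi_{t_{1:S}}\,\|\,\mathrm{Softmax}(\ell_{t_{1:S}}))$. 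Because $\mathrm{Softmax}$ is shift-invariant, I can further quotient each $\ell_{t_{1:S}}$ by the all-ones direction and work on a closed parameter set.

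Second, I would establish coercivity. Under Assumption 2 every $\pi_{t_{1:S}}$ has full support, so $d_{KL}(\pi_{t_{1:S}}\,\|\,q)\to\infty$ as $q$ approaches the boundary of the simplex, i.e. as any coordinate of $q$ tends to $0$; equivalently, as $\|\ell_{t_{1:S}}\|\to\infty$ (modulo the all-ones direction), the divergence blows up. Hence any minimizing sequence of logit-families stays in a bounded region of the quotient space. This is the step I expect to be the main obstacle: one must verify carefully that the subspace constraint (rank $\leq d$, shared across all sequences) interacts well with this coercivity — in particular that restricting a bounded logit-family to a $d$-dimensional subspace keeps the corresponding $(W,E)$ in a compact set after a suitable normalization of the factorization $WE$ (e.g. fixing $\|W\|$ via the scaling ambiguity, or parametrizing by the subspace and the coordinates directly). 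The Grassmannian of $d$-planes is compact, which handles the subspace degree of freedom cleanly.

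Third, with the minimization now over a compact set and the objective $f\mapsto d_{KL}(\pi,f)$ continuous there (it is a finite positive combination of $\mathrm{KL}$ divergences, each continuous in the second argument on the open simplex, and we have excluded the boundary by coercivity), the extreme value theorem gives a minimizer $f^\star$. By construction $d_{KL}(\pi,f^\star)=\inf_{f\in\mathcal{L}(N,S,d)}d_{KL}(\pi,f)=d_{KL}(\pi,\mathcal{L}(N,S,d))$, which is the claim. I would close by remarking that Assumption 2 is genuinely needed: without full support, the infimum can be a limit with some target coordinate $0$ that is never attained for finite logits, so no minimizer exists in $\mathcal{L}(N,S,d)$.
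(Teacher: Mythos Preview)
Your proposal is correct and follows essentially the same route as the paper: take a minimizing sequence, use Assumption 2 (full support of each $\pi_{t_{1:S}}$) to obtain coercivity of the conditional KL terms so that the centered logits stay bounded, extract a convergent subsequence by compactness, and conclude by closedness of the feasible set. The only cosmetic difference is that you handle the rank-$d$ constraint by parametrizing with the Grassmannian of $d$-planes and normalizing the $(W,E)$ factorization, whereas the paper works directly with the logit matrix $WE$ and invokes that $\{M:\operatorname{rank}(M)\le d\}$ is closed; both are equivalent ways of ensuring the limit remains in $\mathcal{L}(N,S,d)$.
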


\textit{1. Case $\varepsilon=0$}: we have our AoT as in equation (2), and thanks to Lemma 1 we take $f_{W, E}$ an optimal sequence encoder that we approximate. By choosing $W_U = W$, we want to solve the system $E(t_{1:S}) = e(t_S)+pos_S+W_OW_VA(t_{1:S})$ for all token sequence $t_{1:S}$.

We start by showing that the skip connection $e(t_S)+pos_S$ can almost be written as an attention head, by a change of basis \[e(t_S)+pos_S = W_O^0W_V^0(e'(t_S)+pos'_S)\] and using the attention pattern $W_{QK}^0 = \lambda I_d$. When $\lambda$ is large enough, the attention head with matrices $W_{QK}^0, W_V^0, W_O^0$ becomes arbitrarily close to $e(t_S)+pos_S$. Thus, we can augment $A'$, the attention before output, by adding head 0, which has an inner dimension of $d$. Since the result is only valid in the limit of $\lambda\rightarrow+\infty$, this explains the constant $\gamma>0$ in the theorem.

Now, $W_VA'\in\mathbb{R}^{Hd_h+d, T_0}$, and since $T_0 \leq Hd_h+d$, the linear system $E(t_{1:S})= W_OW_VA'(t_{1:S})$ is solvable, where the variable is $W_O$, if the family $\{W_VA'(t_{1:S})\}_t$ has rank $T_0$, which is equivalent to having $\{A'(t_{1:S})\}_t$ with rank $T_0$ since $W_V$ reduces the rank. We are left with proving Lemma \ref{lemma:2}.

\begin{lemma}
    \label{lemma:2}
    Let $A$ the attention before output of a model with $H$ heads. For $T_0$ token sequences, there exists matrices $W_{QK}^i$ and embeddings $e$ and $pos$ such that the family $\{A(t_{1:S})\}_t$ has rank greater than $\min\left(T_0, Hd\right)$.
\end{lemma}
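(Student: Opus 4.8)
The plan is to lower-bound the rank by an explicit construction in which each of the $H$ heads performs near-hard attention, so that on each of the $T_0$ target sequences head $h$ effectively reads off a single position, and then to place the token and positional embeddings in general position so that the $T_0$ resulting vectors in $\mathbb{R}^{Hd}$ are linearly independent. It suffices to treat the case $T_0\le Hd$: if $T_0>Hd$, run the construction on any $Hd$ of the sequences; since rank only grows when vectors are added and is bounded by the ambient dimension $Hd$, the full family then has rank exactly $Hd=\min(T_0,Hd)$.

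The main mechanism is sharpening. Replacing each pattern matrix by $\lambda W_{QK}^h$ and letting $\lambda\to\infty$, the softmax weights $a^h(t_{1:S})$ converge, exponentially fast and uniformly over the finite target set, to the indicator of $p^h(t_{1:S}):=\operatorname{argmax}_s \text{raw}^h(t_{1:S})_s$, so $A^h(t_{1:S})\to e(t_{p^h(t_{1:S})})+pos_{p^h(t_{1:S})}$. Since ``the family $\{A(t_{1:S})\}_t$ has rank $\ge r$'' is an open condition on finitely many vectors, it is enough to exhibit embeddings and pattern matrices whose hard-attention limit has rank $\min(T_0,Hd)$; a large enough finite $\lambda$ then gives the same rank for the actual soft model (the same large-scale limit as for the skip-connection head above). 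To control the selected positions I would let $pos_S$ dominate the query, so that $e(t_S)+pos_S$ is effectively a fixed vector $q_0$ across all targets; head $h$'s score at position $s$ of sequence $j$ then reduces to $\phi^h(t^{(j)}_s)+\beta^h_s$ with $\phi^h(\tau)=\langle W_{QK}^{h\top}q_0,e(\tau)\rangle$ and $\beta^h_s=\langle W_{QK}^{h\top}q_0,pos_s\rangle$, giving direct control of which position each head selects on each sequence.

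Next I would choose the read-out directions $w^h$ (so that $\phi^h(\tau)=\langle w^h,e(\tau)\rangle$, $\beta^h_s=\langle w^h,pos_s\rangle$, taking the rank-one $W_{QK}^h=q_0(w^h)^\top/\|q_0\|^2$) so that the $H$ induced selection maps $j\mapsto (t^{(j)}_{p^h(j)},p^h(j))$ jointly separate the $T_0$ sequences, and then fix $e$ and $pos$ generically so that the concatenated vectors $\bigl(e(t^{(j)}_{p^1(j)})+pos_{p^1(j)},\dots,e(t^{(j)}_{p^H(j)})+pos_{p^H(j)}\bigr)\in\mathbb{R}^{Hd}$ are linearly independent whenever there are at most $Hd$ of them. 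This last point holds for generic embeddings because, once the patterns are fixed, the relevant $T_0\times T_0$ minor is a polynomial in the embedding entries that is not identically zero, so its vanishing locus is a proper subvariety. Translating $w^h,e,pos$ back into $W_{QK}^h$ and taking $\lambda$ large closes the argument.

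The main obstacle is the middle step. Because the query is tied to the last token only and, in small embedding dimension $d$, the $\phi^h$ and $\beta^h$ are constrained to be linear functionals of $e$ and $pos$, one cannot freely prescribe an arbitrary position-selection pattern per head; the delicate part is to show there is always enough freedom — for instance by using $H$ distinct token-priority orders $\phi^h$ whose induced selections jointly identify each sequence, or by a block argument assigning disjoint groups of at most $d$ sequences to the heads — while keeping the embeddings generic enough to guarantee genuine linear independence of the $T_0$ output vectors and not merely their distinctness (the one-hot encoding of sequences already shows that distinctness is far weaker than independence). I expect the bulk of the appendix proof to be exactly this combinatorial bookkeeping together with the genericity (non-vanishing minor) argument.
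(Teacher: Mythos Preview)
Your plan and the paper's proof diverge at the core step. You pass to the hard-attention limit $\lambda\to\infty$, so each head selects a single position, and then try to argue combinatorially that some realizable selection pattern makes the $T_0$ concatenated vectors linearly independent for generic embeddings. The paper does not do this. It fixes the embeddings once and for all to be algebraically generic (Lemma~3: their coordinates satisfy no nonzero integer polynomial relation) and then proves that the map $W\mapsto A^h(W)$ has image contained in no hyperplane of $\mathbb{R}^{d\times T_0}$. The argument restricts to rank-one $W$ with a single nonzero entry $w$, Taylor-expands the softmax denominator $\frac{1}{1+\sum e^{\cdots}}$ for large $w$, and obtains a functional identity of the form $\sum_t\sum_{j,i_0,\ldots,i_j}(\pm)v_t^Tx(t,i_0)\,e^{w\cdot(\text{exponent})}=0$. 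Since exponentials with distinct exponents are linearly independent, one can read off the coefficient of each exponent; the algebraic genericity of the embeddings is exactly what guarantees that certain handcrafted exponents are attained by a unique $(t,i_0,\ldots,i_j)$, which forces $v_t=0$. One then picks the $W_{QK}^h$ iteratively, each new head raising the rank by $d$. In short, the paper exploits all orders of the softmax expansion, not just the zeroth-order hard limit you use.

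Your self-identified obstacle is genuine, and it is not what the paper's appendix resolves. Two points make the hard-attention route harder than you suggest. First, in the hard limit each head's output is drawn from the finite set $\{e(\tau)+pos_s\}$, so for any fixed selection pattern the $T_0\times T_0$ minor you want nonzero is a polynomial in the embedding entries through a fixed combinatorial template; if two sequences agree at every position selected by every head, that polynomial is identically zero no matter how generic the embeddings are. Ruling this out for \emph{some} achievable pattern, for \emph{all} $T_0$ target sequences simultaneously, is precisely the combinatorics you defer, and your block heuristic does not obviously close it: the same linear score $\phi^h(\tau)+\beta^h_s$ acts on all sequences at once, so a head tuned to separate its block still imposes uncontrolled selections on the other blocks. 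Second, enlarging $pos_S$ to freeze the query also enlarges the key at position $S$, so $\beta^h_S\approx\langle w^h,pos_S\rangle$ will dominate unless $w^h$ is nearly orthogonal to $pos_S$; in small $d$ this removes a degree of freedom you were counting on for $\phi^h,\beta^h$. The paper's analytic route sidesteps both issues entirely.
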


First, one can see that it is sufficient to prove the Lemma when $T_0\geq Hd$, since otherwise one can add dummy examples to have $T_0 = Hd$. We compute the rank in terms of the rows, head by head, meaning that Lemma 2 reformulates as proving that the function $W_{QK}^h \rightarrow A^h$ has its image not contained in any hyperplane of $\mathbb{R}^{d, T_0}$.

This is now a technical result: recall that an attention head output is a weighted average of the tokens by the attention pattern. Now, suppose that this function is orthogonal to some vector $v\in\mathbb{R}^{d, T_0}$ for all matrices $W$. Using Taylor expansions, we transform the softmax from the attention pattern into infinite sums of exponential. The equation is now stating that exponential functions sum to 0 for all $W$. Thus, by recognizing that exponential functions form a free family, we identify the coefficients for each different exponent. Using handcrafted exponent, we show that the coefficients before the exponents form a system, the only solution of which is $v=0$. 

\textbf{Remark 2.} In the proof of Lemma \ref{lemma:2}, one doesn't need to have $W_{QK}^i$ with a rank more than 1. This means that some parameters could be saved, using this construction with $W_Q,W_K\in\mathbb{R}^{d}$, but still $W_V\in\mathbb{R}^{d_h,d},W_O\in\mathbb{R}^{d,d_h}$. The number of parameter needed in Theorem 1 is thus $d(S+2N+2(d_h+1)H)$. 

\textit{2. Case $\varepsilon>0$}: here, we reuse the previous idea on the most likely sequences that cumulate $1-\varepsilon$ of the probability. So we solve the same linear system as before for the most likely sequences, and we control the error for the low probability sequences. If we note $S_2$ the set of the low probability sequences, we can bound the difference in $KL$-divergence by the $L_2$ norm of the logits 

\begin{equation*}
    \begin{split}
        |d_{KL}(\pi, f_{W, E})& - d_{KL}(\pi, \mathcal{T}^*)| \\
        &\leq \mathbb{E}_{t_{1:S}\in S_2}\left[|WE(t_{1:S}) - \mathcal{T}(t_{1:S})|\right]\\
        &\leq \varepsilon||WE - \mathcal{T}||_2
    \end{split}
\end{equation*}

With the same choice as in the first case, we can bound the $L_2$ norm of the logits difference by the term $C||WE||_2$ which represent the error of the best predictor, and the constant in the theorem is $C = \sqrt{1+||(P_1VP_1^T)^{-1}||_2^2}$ where $P_1$ is the projection that keeps the most likely sentences, and $W_VA' = UI_{\Sigma}P_1V$ the singular value decomposition of $W_VA'$\footnote{Here, the matrices $V$ and $W_V$ denote different object and are not subscript from one another. We choose to keep the usual notations from the singular value decomposition as well as the attention mechanism.}. The constant $C$ is the term that depends on the expressivity of the attention mechanism itself and is thus hard to control with only Lemma \ref{lemma:2}.

A natural question is whether one can drop Assumption \ref{assump:2} ? To do so, one can take a sequence encoder $f_{\delta}$ which is approximating the minimum divergence at $\delta$, and whose limit distribution is $\pi_0$. Then, using the same ideas as above, we approximate $f_{\delta}$ using the AoT. Thus, one has to control the growth of $||f_{\delta}||_2$. But the assumption on $f_{\delta}$ ensures that the convergence is fast enough, but we need an upper bound on the speed of convergence. Thus, it is sufficient to take $c>0$ and $\alpha\geq 1$ such that for any token sequences $t$ satisfying $\pi_0(t_{S+1}|t_{1:S})>0$, we have the property

\begin{equation}
        c\delta^{\alpha} \leq \left|\log\left(\frac{\pi_0(t_{S+1}|t_{1:S})}{\text{softmax}(f_{\delta}(t_{1:S}))_{t_{S+1}}}\right)\right| \leq \delta
\end{equation}

The lower bound will be used to prove that the norm of $f_{\delta}$ can be controlled by $O\left(\log\left(\frac{1}{\delta}\right)\right)$, while the upper bound gives the convergence of $f_{\delta}$ to the desired distribution. We develop this idea in Appendix \ref{section:poly_approx} to prove a weaker alternative to Theorem \ref{thm:main} which holds for any distribution $\pi$.

\section{EXPERIMENTS}
\label{section:experiments}
In Corollary \ref{corollary:1}, we obtained a lower bound on the number of associations a one layer AoT can remember. We now want to know what the empirical scaling of the memorization power is. To this end, we will analyze scaling laws of the accuracy of trained AoT (experiments 1 to 4) as well as MLP-based Transformers (experiment 5) for different values of $H$, $d$, and $d_h$.

\subsection{AoT Memorization}
\label{section:AoT}

\begin{figure}[t]
    \begin{subfigure}{\linewidth}
        \centering
        \includegraphics[width=\linewidth]{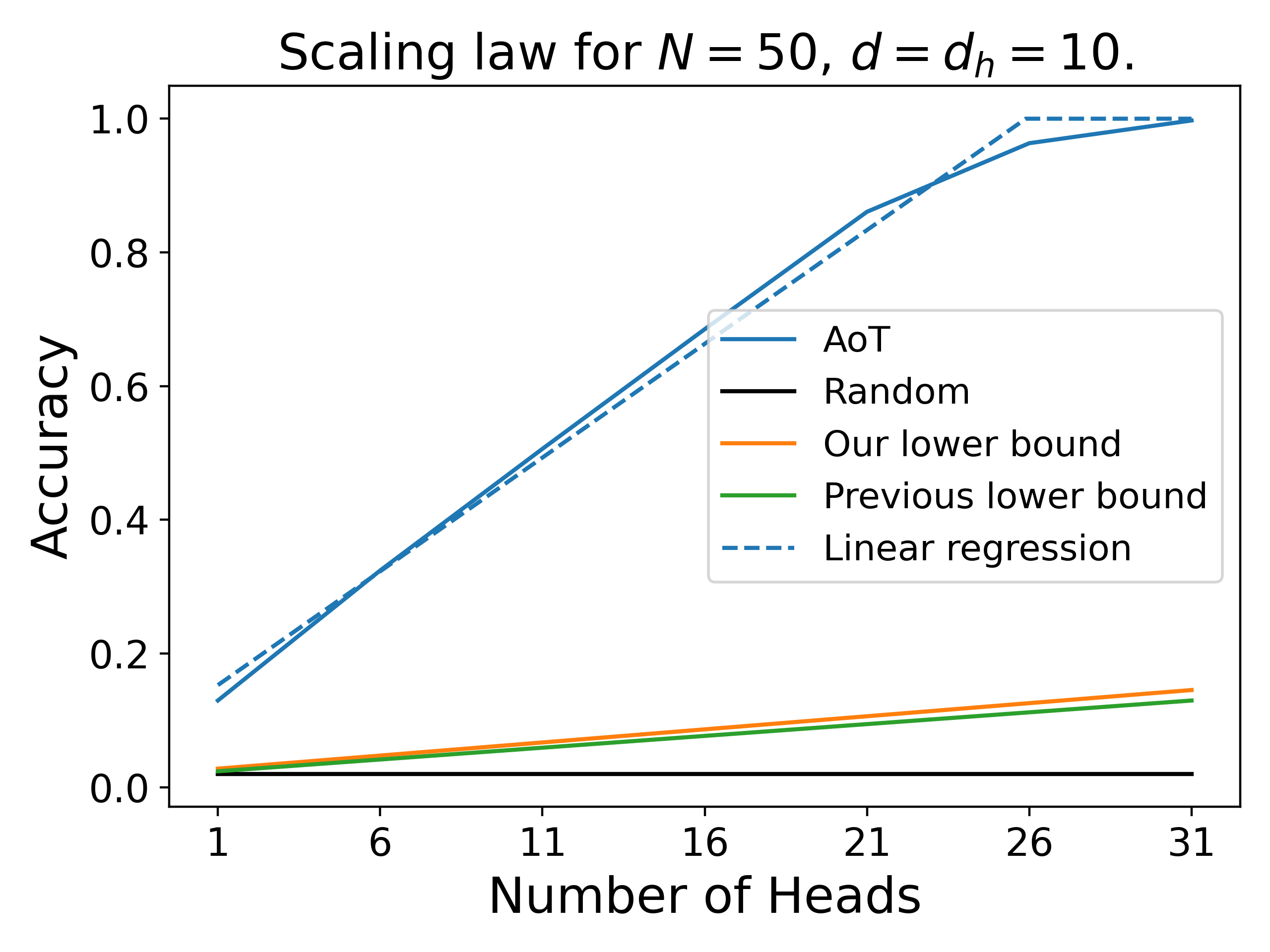}
        \caption{Experiment 1. Accuracy scaling law as $H$ grows. The head dimension is fixed to 10. The scaling is linear.}
    \end{subfigure}
    \hfill
    \begin{subfigure}{\linewidth}
        \centering
        \includegraphics[width=\linewidth]{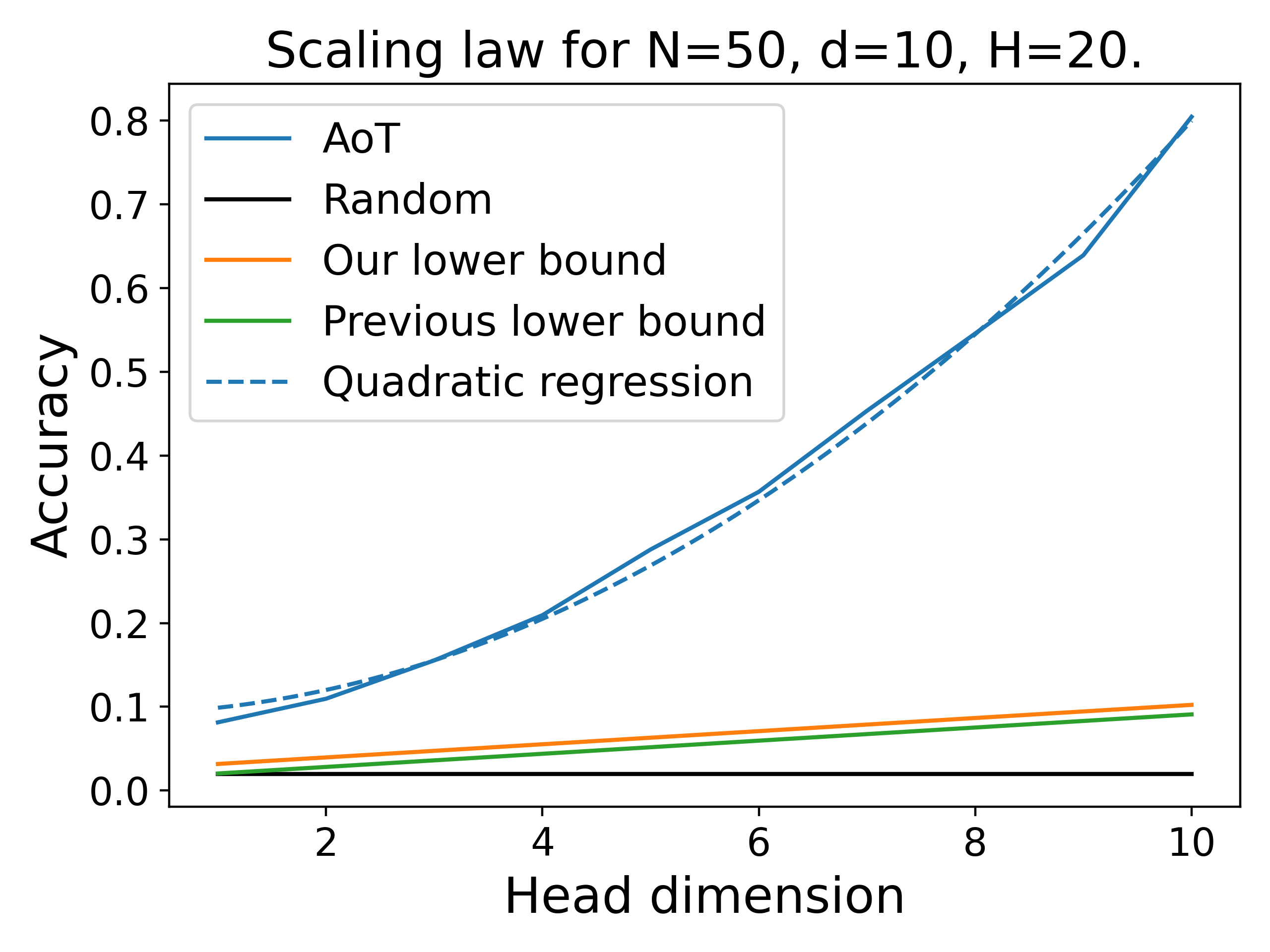}
        \caption{Experiment 2. Accuracy scaling law as $d_h$ grows. The model has 20 heads. The scaling is quadratic.}
    \end{subfigure}
    \caption{Scaling laws on $H$ and $d_h$. The embedding dimension is 10 and the dictionary size is 50. The blue dotted lines are the linear or quadratic least square approximation of the empirical accuracy.}
\end{figure}

In our experiments, we train an AoT on bigrams, meaning that $S=2$, and with a dictionary size of $N=50$. Since we want to measure associative memory, we take $\pi$ satisfying Assumption \ref{assump:1}. Moreover, we take the prior distribution of $\pi$ uniform over all pairs of tokens. This way, the accuracy will be a good measure of the associative memory. But accuracy measures the performance of the AoT in average, meaning that by pure chance, using 0 head, the AoT will have $\frac{1}{N}$ accuracy, and our Corollary \ref{corollary:1} is a statement in worst-case. Thus, to compare the scaling laws in accuracy with our results, we first need to state Corollary \ref{corollary:2}.

\begin{coro}
    Under Assumption \ref{assump:1} and for a uniform prior, there exist an AoT $\mathcal{T}$ such that \[\mathbb{E}_{t_{1:S}}[Acc(\mathcal{T}(t_{1:S}))]\geq \frac{1}{N}+\left(1-\frac{1}{N}\right)\frac{Hd_h+d}{T_0}.\]
    \label{corollary:2}
\end{coro}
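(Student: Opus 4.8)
The plan is to reduce to the exact-memorization construction of the $\varepsilon=0$ case of Theorem~1 (of which Corollary~1 is the $d=2$ instance, generalized here to embedding dimension $d$) and then control the behaviour of the AoT on the sequences it does \emph{not} memorize. Write $M:=Hd_h+d$; since the prior is uniform, each sequence carries mass $1/T_0$ (with $T_0=N^S$ for the uniform prior). The target decomposition is: memorize a well-chosen set $\mathcal{S}^\star$ of $M$ sequences exactly, so each contributes $1$ to $\mathbb{E}_{t_{1:S}}[Acc]$, and show the remaining $T_0-M$ sequences contribute at least $\tfrac1N$ each on average. The two parts then combine to $\tfrac{M}{T_0}\cdot 1+\bigl(1-\tfrac{M}{T_0}\bigr)\cdot\tfrac1N=\tfrac1N+\bigl(1-\tfrac1N\bigr)\tfrac{M}{T_0}$, which is exactly the claimed bound with $M=Hd_h+d$.

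First I would invoke Theorem~1 with $\varepsilon=0$ applied to the sub-distribution supported on $\mathcal{S}^\star$ (uniform on those $M$ sequences, slightly smoothed as in the paragraph preceding Corollary~1): this gives an AoT of the stated dimensions whose logits at every $t_{1:S}\in\mathcal{S}^\star$ attain their strict maximum at the correct token. The remaining task is to use the \emph{free} parameters of this construction — the positional embeddings, the rank-$1$ slack in the $W_{QK}^h$ noted in Remark~2, and the placement of the $N$ rows of $W_U$ (unconstrained once $d\ge 2$, by Proposition~3) — so that on every $t_{1:S}\notin\mathcal{S}^\star$ the logit vector is, up to an arbitrarily small perturbation, constant across the vocabulary. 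Choosing the rows of $W_U$ in a symmetric configuration of $N$ directions in $\mathbb{R}^d$ (the $N$-th roots of unity for $d=2$) and breaking the residual ties by an infinitesimal random rotation makes the predicted token on each non-memorized sequence uniform over $[N]$, so its expected accuracy is $\tfrac1N$; by the probabilistic method some realization attains at least the average, and summing with the $M$ exactly-memorized sequences yields the bound.

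I expect the main obstacle to be precisely this middle step: simultaneously (i) keeping strict-argmax memorization on $\mathcal{S}^\star$ and (ii) forcing an essentially flat logit profile on the $T_0-M$ off-support sequences within a \emph{single} attention layer, given that $A'(t_{1:S})$ on those sequences is pinned down once the $M$ memorization equations are solved. A cleaner route around it is a pigeonhole variant: let $t^\star$ be the vocabulary token that is the correct answer for the most sequences (hence for at least $T_0/N$ of them), pick $\mathcal{S}^\star$ to use up the non-$t^\star$ sequences first, and steer the off-support logits toward $t^\star$; then the off-support accuracy is at least $\tfrac{T_0/N}{T_0-M}\ge\tfrac1N$. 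Either way the crux is showing the off-support predictions can be steered — toward uniformity, or toward $t^\star$ — without disturbing the exact memorization on $\mathcal{S}^\star$; everything else is the one-line averaging computation above.
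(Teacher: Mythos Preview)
The paper never actually proves Corollary 2. It is stated immediately after introducing the map $\phi(X)=\tfrac1N+(1-\tfrac1N)\tfrac{X}{T_0}$ as the ``correspondence between associative memory and accuracy'', and is then used only as a plotting baseline. So the implicit argument you are formalizing --- memorize $M=Hd_h+d$ sequences via Corollary~1, pick up $1/N$ on the rest, average --- \emph{is} the paper's reasoning, and your decomposition in the first paragraph matches it exactly.

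You are also right that this argument, taken literally, has a gap: once the linear system $W_OW_VA'|_{\mathcal{S}^\star}=E|_{\mathcal{S}^\star}$ is solved (and with $|\mathcal{S}^\star|=M=Hd_h+d$ it is exactly determined), the off-support embedding $W_OW_VA'(t)=E|_{\mathcal{S}^\star}\,(W_VA'|_{\mathcal{S}^\star})^{-1}W_VA'(t)$ is pinned down, so there is no a~priori reason its argmax under $W_U$ should hit $g(t)$ with probability $1/N$. The paper simply does not address this.

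Where your proposal falls short is in the two fixes. Route~1 does not work as written. With $W_U$ set to the $N$-th roots of unity, $W_UE'(t)$ is \emph{not} constant across the vocabulary unless $E'(t)=0$; it is $(\langle w_j,E'(t)\rangle)_{j}$, which has a well-defined argmax for generic $E'(t)$. An ``infinitesimal random rotation'' of $W_U$ then either (i) is compensated by rotating $E|_{\mathcal{S}^\star}$ and hence $W_O$, in which case $W_U^{(\theta)}E'^{(\theta)}(t)=W_UE'(t)$ and nothing changes off-support, or (ii) is not compensated, in which case a small $\theta$ leaves the generic off-support argmax untouched and only a \emph{large} rotation would cycle it --- but that breaks the on-support memorization. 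So the probabilistic-method step does not go through.

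Route~2 (pigeonhole on $t^\star$) is the sounder idea, and your arithmetic there is correct, but it still hinges on ``steering the off-support logits toward $t^\star$''. You flag this as the crux and do not resolve it; the free parameters you list (rank-1 slack in $W_{QK}$, positional embeddings, placement of $W_U$) do not obviously give you this control once $W_O$ is fixed by the $M$ equations. In short: you have correctly located a genuine lacuna in the paper, and your outline is the right scaffolding, but neither route you propose closes the gap without a further argument showing the off-support outputs can be biased toward a chosen token while the $M$ memorization constraints remain satisfied.
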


In general, the transformation \[\phi:X\in[0,T_0]\rightarrow \frac{1}{N}+\left(1-\frac{1}{N}\right)\frac{X}{T_0}\in[0,1]\] gives the correspondence between associative memory and accuracy. We will use Corollary \ref{corollary:2} as a lower bound for our experiment, which we call \textit{Our lower bound}. The other lower bound called \textit{Previous lower bound} is the one obtained by \citet{mahdavi2024memorization}, which we improve on, and discuss in section \ref{section:related_work}. For more details on the experiments, please read appendix \ref{appendix:experiments}.

As said before, we want to understand the empirical scaling laws of the associative memory. In particular, we are interested in the following questions: is the scaling linear in $H$ and $d_h$ ? What is the effect of $d$ on the scaling laws ? To answer the first question, we trained AoT with $d=10$ fixed, we vary $H$ and $d_h$, and we measure the accuracy after training. Then, we plot the scaling laws when $d$ grows with a constant $d_h$. Both lower bound for the memorization power of the AoT are displayed, along with the baseline accuracy of a random AoT.

As observed on figure 1.a, the scaling in $H$ is linear for $d_h$ fixed, and on 1.b the scaling in $d_h$ is quadratic when $H$ is fixed. This means that the memorization power empirically scales as $C(d,N)Hd_h^2+C'(d,N)$. Although the scale is linear in $H$ as in Corollary \ref{corollary:1}, the constant is not the same as the growth is actually much faster. The comparison cannot be tight since Corollary 1 gives the same scaling in $H$ for all $d\geq 2$\footnote{This is all the more true when taking into account the remark from the last section on the rank of $W_{QK}$.}. In Appendix \ref{appendix:experiments}, we test the scaling of $H$ with $d=2$, showing in this case that the empirical scaling law is still faster than our bound.

The constant term $C'(d,N)$ is proven to be linear in $d$ by \citet{cabannes2024scaling}, meaning that a matrix multiplication of rank $d$ can store at most $d$ associations. Interestingly, this experiment shows that taking $d=d_h$ is optimal in terms of memorization per parameter, since the parameter scaling is linear in $d_h$ yet quadratic (for $d_h\leq d$) in memorization power. This quadratic scaling seems to be linked to the increase in expressivity of the attention pattern rather than the increase in the rank. Indeed, the rank increase when the number of heads increases only leads to a linear increase in the memorization capacity.

Next, we try to understand the scaling of $C(d,N)$ in the variable $d$. First, we fix $H=20$ and $d_h=10$ to plot the scaling law. As shown on figure 2.a, the scaling law is composed of two parts: $d\leq d_h$ and $d\geq d_h$. When $d\geq d_h$, the scaling is expected to be constant because the increase in embedding space is not beneficial to the heads when they have smaller inner dimension. The fact that we observe a greater linear scaling for $d\leq d_h$ also suggest that $C(d,N)$ is linear in $d$ for $d\leq d_h$ only.

Second, we compute the scaling of $C(d,N)$ using $d=d_h$. This way, we can compute $C(d,N)d_h^2$ by measuring the growth of the accuracy with $H$ for different values of $d$. Figure 2.b shows the growth of this scaling coefficient and compares it to different models. Each model is computed to be a least-square approximation. We observe that the coefficient's growth is between quadratic and cubic when $d=d_h$. 

Combining experiments 1,2 and 3 means we should expect a cubic scaling from experiment 4. However, we provide further experiments in larger dimension and with more layers, in Appendix \ref{appendix:experiments}, suggesting that the scaling might be only quadratic.

\begin{figure}[t]
    \begin{subfigure}{\linewidth}
        \centering
        \includegraphics[width=\linewidth]{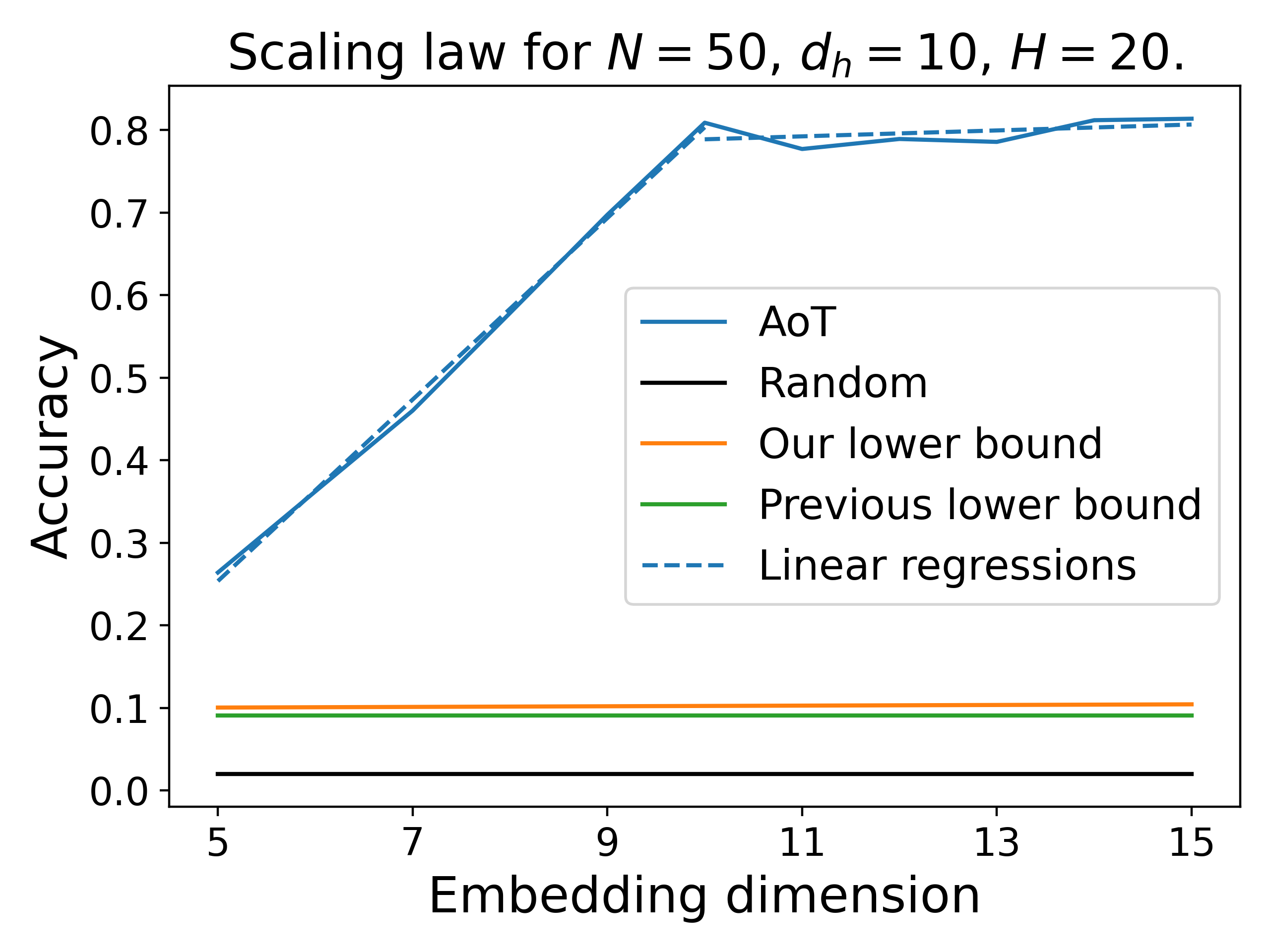}
        \caption{Experiment 3. Accuracy scaling law as $d$ grows. The head dimension is fixed to 10 and the AoT has 20 heads.}
    \end{subfigure}
    \hfill
    \begin{subfigure}{\linewidth}
        \centering
        \includegraphics[width=\linewidth]{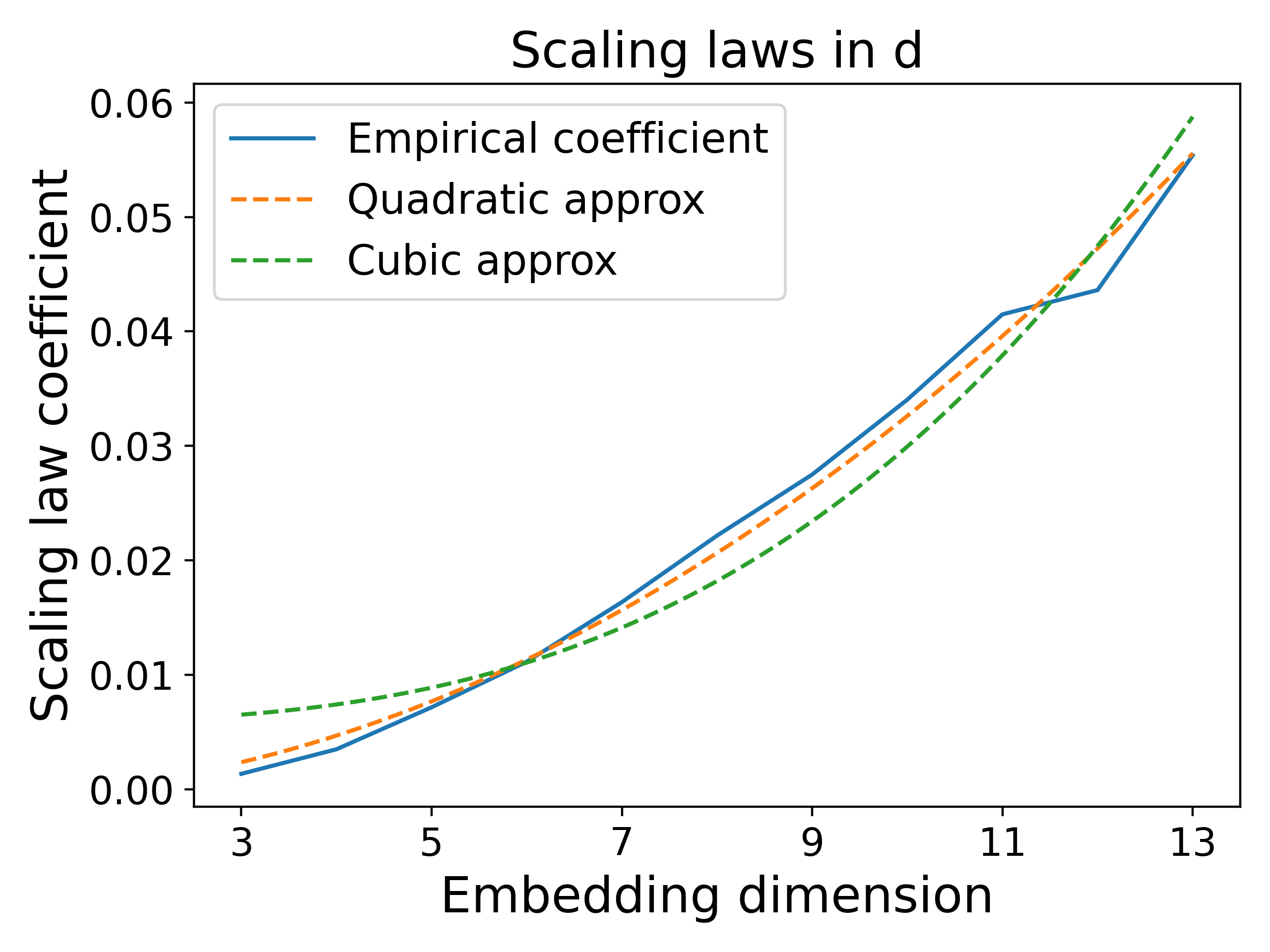}
        \caption{Experiment 4. Scaling laws of the coefficients obtained by linear regression of the scaling laws on $H$ as in figure 1.a. $N=50$}
    \end{subfigure}
    \caption{Two different measurements of the constant $C(d,N)$. On the left when only $d$ varies, and on the right when $d=d_h$ varies.}
\end{figure}

\subsection{MLP memorization}
\label{section:MLP}

The motivation for this paper was to understand the memorization power of the attention mechanism, especially in terms of associative memory. We now know that the attention mechanism can memorize associations, but it is still unknown how memorizing with attention heads compares to memorizing with an MLP. To understand whether attention heads or MLPs will be memorizing in LLMs, we want to compare the accuracy scaling laws of an MLP-based Transformer with that of an AoT with equal number of parameters. To that end, the AoT will be taken as before with $d=d_h$, and the MLP-based Transformers will be one layer of attention with only one head of dimension $d_h=d$ as well, followed by an MLP layer with width $w$. 
\[\mathcal{T}(t) = W_U(W_2\sigma W_1 + I_d)(W_OW_VA(t)+e(t_S)+pos_S)\]
The non-linearity $\sigma$ is chosen to be the GELU function and is applied component-wise to its input vector. The MLP-based model needs at least one attention head in order to mix the tokens together.

Figure 3 shows that the scaling of the AoT and the MLP-based Transformer are different for the same number of parameters. In particular, the MLP-based Transformer does not have a linear scaling law compared to the AoT. This is because the MLPs are harder to optimimzed, and so when the AoT is done learning, the MLPs have not reached there full potential. Thus, under optimization constraints, the AoT is better at memorization. However, in LLMs, this constraint does not hold, and the greater memorization capacity could explain why the MLPs store the facts and associations.

\begin{figure}[t]
    \centering
    \includegraphics[width=\linewidth]{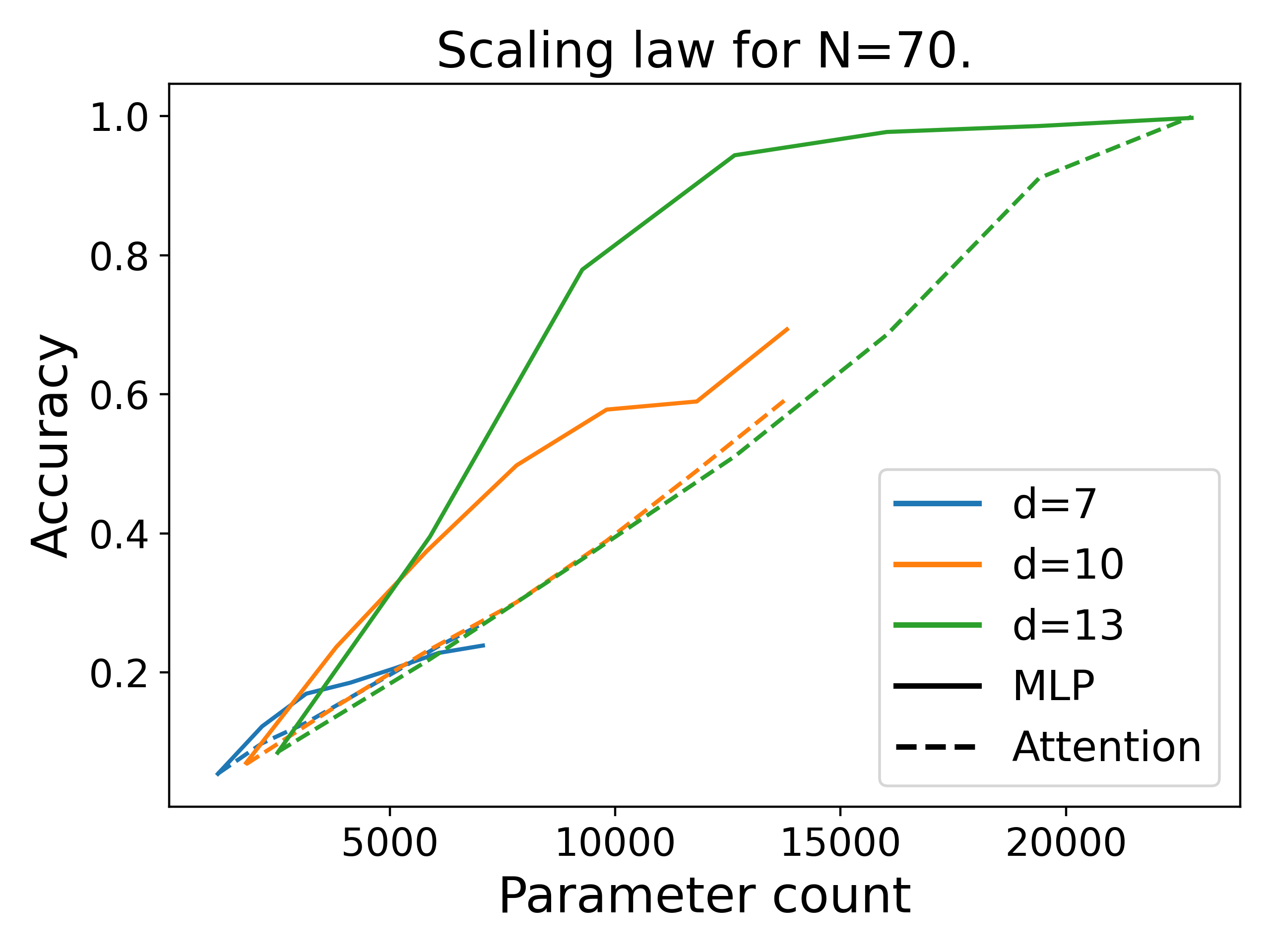}
    \caption{Experiment 5. Accuracy scaling laws on the number of parameters for two models: an AoT and an MLP-based Transformer. Both of them have the same embedding dimension $d$.}
\end{figure}

\section{UPPER BOUND FOR SEQUENCE ENCODERS}
\label{section:bounds}

In Theorem \ref{thm:main}, we are able to upper bound the divergence of the best AoT by the divergence of the best sequence encoder. In this section, we want to upper bound the sequence encoder divergence to have better control on the performance of the best AoT. 

This task is hard in general and is highly dependent on the fact that we are optimizing the KL divergence. For example, when a distribution $\pi$ is close to uniform, it will be easily approximable by a sequence encoder $f$. Especially, the centered logit difference 

\begin{equation*}
    \begin{split}
        \mathcal{Z}_{t_{1:S}}^{t_{S+1}} &:= f(t_{1:S})_{t_{S+1}} - \log(\pi(t_{S+1}|t_{1:S}))\\
        & - \mathbb{E}_{t_{S+1}}[f(t_{1:S})_{t_{S+1}} - \log(\pi(t_{S+1}|t_{1:S}))]
    \end{split}
\end{equation*}

will be close to 0. Doing a Taylor on the KL divergence reveals that the first order term will be the $L_2$ norm of $\mathcal{Z}_{t_{1:S}}^{t_{S+1}}$. In comparison, if we optimize for the total variation loss, one would find a different first order approximation.

In a different setting than the previous remark, we are able to control the divergence when the target probability satisfies Assumption \ref{assump:1}. We can think of $\pi$ as implementing a look-up table with some noise, and we define $g:[N]^S\rightarrow [N]$ the true look-up table meaning that $\pi(g(t_{1:S})|t_{1:S})\simeq 1$. Although Theorem \ref{thm:2} holds for any distribution $\pi$ and function $g$, the upper bound is to be read with the above relation between $\pi$ and $g$ in mind.

\begin{theorem}
    \label{thm:2}
    Let $g:[N]^S \rightarrow [N]$. There exists $W\in\mathbb{R}^{d,N}$ such that \[||WW^T-I_d||_{\infty} \leq C = \sqrt{\frac{32\log(N+1)}{d}}\] We choose $f_{W,E}$ with $E(t_{1:S}) = \lambda({t_{1:S}})W^T_{g({t_{1:S}})}$, and $\lambda({t_{1:S}})$ the solution to the equation below.
    
    \begin{equation*}
        -H(\pi_{t_{1:S}}) = \log\left(\sum_je^{\lambda({t_{1:S}})(W_j-W_{g({t_{1:S}})})^TW_{g({t_{1:S}})}}\right)
    \end{equation*}
    
    We obtain the bound 
    \begin{equation*}
        \begin{split}
            d_{KL}(\pi, \mathcal{L}(N,S,d)) &\leq \mathbb{E}_{t_{1:S}}\Big[(1-\pi(g({t_{1:S}})|{t_{1:S}}))\\
            &\times\left(\frac{1+2C+Cd_{TV}(\tilde{\pi}_{t_{1:S}}, \pi_{\text{unif}})}{1-2C}\right)\\
            &\times\log\left(\frac{N-1}{e^{-H(\pi_{t_{1:S}})}-1}\right)\Bigg]
        \end{split}
    \end{equation*}
    
    Where, if $t_{S+1}\neq g(t_{1:S})$, $\pi_{\text{unif}}(t_{S+1}) = \frac{1}{N-1}$ and $\tilde{\pi}_{t_{1:S}}(t_{S+1}) = \frac{\pi_{t_{1:S}}(t_{S+1})}{1-\pi(g(t_{1:S})|t_{1:S})}$, and $0$ otherwise for both distributions.
\end{theorem}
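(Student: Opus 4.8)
The idea is to choose the unembedding $W$ so that the token vectors $W_1,\dots,W_N\in\mathbb{R}^d$ (the columns of $W$) are almost orthonormal, and then, for each input sequence $t_{1:S}$, to let the sequence embedding point along the single correct direction: $E(t_{1:S})=\lambda(t_{1:S})\,W_{g(t_{1:S})}$. The logits become $f(t_{1:S})_j=\lambda(t_{1:S})\,W_j^TW_{g(t_{1:S})}$, so $\mathrm{softmax}(f(t_{1:S}))$ is a smoothed indicator of $g(t_{1:S})$ whose peakedness is controlled by the single scalar $\lambda(t_{1:S})$; we tune it so that the mass placed on $g(t_{1:S})$ equals $e^{-H(\pi_{t_{1:S}})}$, which is exactly the equation in the statement. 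Then the divergence can be evaluated in closed form, up to the almost-orthonormality errors, and the result follows after taking the expectation over the prior and using $d_{KL}(\pi,\mathcal L(N,S,d))\le d_{KL}(\pi,f_{W,E})$.

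First, the construction of $W$. Draw the columns $W_1,\dots,W_N$ i.i.d.\ from, say, $\mathcal N(0,\tfrac1d I_d)$, so $\mathbb E\|W_j\|^2=1$ and $\mathbb E[W_j^TW_k]=0$ for $j\neq k$. Standard sub-Gaussian/sub-exponential concentration gives $\Pr[\,|W_j^TW_k-\delta_{jk}|>t\,]\le 2e^{-c d t^2}$ for bounded $t$, and a union bound over the $\binom{N+1}{2}$ relevant entries (the $N$ diagonal ones and the $\binom{N}{2}$ off-diagonal pairs) shows that $t=C=\sqrt{32\log(N+1)/d}$ makes the failure probability $<1$; hence some $W$ satisfies the near-orthonormality bound of the statement, $|W_j^TW_k-\delta_{jk}|\le C$ for all $j,k$. (This requires $d$ to be at least of order $\log N$; below we will also need $C<\tfrac12$, and the final bound is informative only when $C$ is small.)

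Next, fix $t_{1:S}$ and write $g=g(t_{1:S})$, $\lambda=\lambda(t_{1:S})$, $\pi_g=\pi(g|t_{1:S})$, $H=H(\pi_{t_{1:S}})$, $\tilde\pi=\tilde\pi_{t_{1:S}}$, and $Z=\sum_k e^{\lambda W_k^TW_g}$. A nonnegative $\lambda$ solving the stated equation exists by the intermediate value theorem: since $(W_k-W_g)^TW_g\le 2C-1<0$ for $k\neq g$, the right-hand side $\log\sum_k e^{\lambda(W_k-W_g)^TW_g}$ decreases continuously from $\log N$ at $\lambda=0$ to $0$ as $\lambda\to\infty$, hence hits every value in $(0,\log N)$ (when $H=0$ we have $\pi_g=1$ and the bound below is vacuous). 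By construction $\mathrm{softmax}(f(t_{1:S}))_g=e^{\lambda W_g^TW_g}/Z=e^{-H}$, so $\log Z=\lambda W_g^TW_g+H$. Substituting $\log\mathrm{softmax}(f(t_{1:S}))_u=\lambda W_u^TW_g-\log Z$ into $d_{KL}(\pi_{t_{1:S}}\|\mathrm{softmax}(f(t_{1:S})))=-H-\mathbb E_{u\sim\pi_{t_{1:S}}}[\log\mathrm{softmax}(f(t_{1:S}))_u]$ and using $\log Z=\lambda W_g^TW_g+H$ collapses everything to
\[
d_{KL}\big(\pi_{t_{1:S}}\,\big\|\,\mathrm{softmax}(f(t_{1:S}))\big)=\lambda\,(1-\pi_g)\big(W_g^TW_g-\mathbb E_{u\sim\tilde\pi}[W_u^TW_g]\big).
\]
Now bound the three factors. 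The factor $1-\pi_g=1-\pi(g(t_{1:S})|t_{1:S})$ is the one in the statement. For the parenthesis, $W_g^TW_g\le 1+C$ and $|W_u^TW_g|\le C$ for $u\neq g$; comparing $\tilde\pi$ with $\pi_{\text{unif}}$ on the $N-1$ tokens $\neq g$ gives $\mathbb E_{u\sim\tilde\pi}[W_u^TW_g]\ge \mathbb E_{u\sim\pi_{\text{unif}}}[W_u^TW_g]-C\,d_{TV}(\tilde\pi,\pi_{\text{unif}})\ge -C-C\,d_{TV}(\tilde\pi,\pi_{\text{unif}})$, so the parenthesis is at most $1+2C+C\,d_{TV}(\tilde\pi,\pi_{\text{unif}})$. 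For $\lambda$: from $\mathrm{softmax}(f(t_{1:S}))_k/\mathrm{softmax}(f(t_{1:S}))_g=e^{\lambda(W_k-W_g)^TW_g}\le e^{-\lambda(1-2C)}$ for $k\neq g$ and $\mathrm{softmax}(f(t_{1:S}))_g=e^{-H}$, summing over $k\neq g$ yields $1-e^{-H}\le (N-1)e^{-H}e^{-\lambda(1-2C)}$, i.e.\ $\lambda\le \frac{1}{1-2C}\log\frac{N-1}{e^{H}-1}$. Multiplying the three bounds and taking $\mathbb E_{t_{1:S}\sim\pi}$ gives the theorem.

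The main obstacle is the bookkeeping of the almost-orthonormality errors: every informal "$W_j^TW_k\approx\delta_{jk}$" must become a two-sided estimate, and one must check that the $C$-dependences assemble into exactly the factor $\frac{1+2C+C\,d_{TV}(\tilde\pi_{t_{1:S}},\pi_{\text{unif}})}{1-2C}$ — in particular, the denominator $1-2C$ arises solely from the upper bound on $\lambda$ (and forces $C<\tfrac12$, hence $d$ large enough). The remaining ingredients — the concentration/union-bound construction of $W$, the existence of $\lambda$ via monotonicity, and the algebraic collapse of the divergence — are routine once the almost-orthonormal frame is fixed.
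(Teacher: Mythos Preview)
Your proof is correct and follows the same route as the paper's: construct an almost-orthonormal frame $W$ (the paper invokes Johnson--Lindenstrauss on the canonical basis, you spell out the random construction plus union bound), pick $\lambda(t_{1:S})$ so that the log-partition term cancels the entropy, bound $\lambda$ via $(W_k-W_g)^TW_g\le -(1-2C)$, and then bound the residual cross term $\lambda(1-\pi_g)\big(W_g^TW_g-\mathbb{E}_{u\sim\tilde\pi}[W_u^TW_g]\big)$ using the $d_{TV}$ comparison. Your algebraic collapse to the single product $\lambda(1-\pi_g)(\,\cdot\,)$ is in fact a bit cleaner than the paper's line-by-line manipulation of $\mathbb{E}_{t_{1:S},t_{S+1}}[f_{g(t_{1:S})}-f_{t_{S+1}}]$, but the content is identical.

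One small inconsistency to fix: the theorem's equation $-H(\pi_{t_{1:S}})=\log\sum_j e^{\lambda(W_j-W_g)^TW_g}$ only has a nonnegative solution if $H(\pi_{t_{1:S}})\le 0$, i.e.\ the paper uses $H(\pi)=\sum_j\pi_j\log\pi_j$ (negative entropy). You invoke that equation literally but elsewhere treat $H$ as the usual nonnegative entropy (writing $d_{KL}=-H-\mathbb{E}[\log q]$ and $\mathrm{softmax}_g=e^{-H}$). With the paper's convention one gets $\mathrm{softmax}_g=e^{H}$ and $\log Z=\lambda W_g^TW_g-H$; the final displayed identity $d_{KL}=\lambda(1-\pi_g)(W_g^TW_g-\mathbb{E}_{\tilde\pi}[W_u^TW_g])$ is unchanged, and your bound on $\lambda$ becomes the theorem's $\frac{1}{1-2C}\log\frac{N-1}{e^{-H}-1}$. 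This is purely notational and does not affect the argument.
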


\begin{table*}[t!]
    \centering
    \begin{tabular}{ |m{2cm}|m{4.5cm}|m{3.5cm}|m{2.5cm}|m{2.5cm}| } 
      \hline
      Paper & Model used & Task & Memorization capacity & Parameter count \\
      \hline
      \citet[Theorem 1]{mahdavi2024memorization} & \rule{0pt}{3ex}One layer, Attention-only Vision Tranformer. Keys and queries may be different vectors. The model doesn’t include a skip connection compared to the other two. & The model maps a sequence of key vectors and one query vector to a desired vector. \newline $\mathcal{T}:\mathbb{R}^{d,S}\times\mathbb{R}^{d}\rightarrow\mathbb{R}^{d}$ & $T_0=H(d_h-1)+1$ & $O(S+N+T_0)$ \\
      \hline
      \citet[Theorem 4.1]{kim2023provable} & \rule{0pt}{3ex}Language Transformer with multiple layers of Attention and MLPs. The attention layers have only one head of inner dimension 1, and the MLPs have width 16. The depth is of size $L$. & \rule{0pt}{3ex}The model maps a sequence of tokens to another sequence of tokens consisting of $S$ times the same token.\newline$\mathcal{T}:[N]^{S}\rightarrow[N]^{S}$ & $T_0=O(L^2)$ & $O(S+N+T_0\log(T_0))$  \\
      \hline
      This paper.\newline(Corollary 1) & \rule{0pt}{3ex}One layer, Attention-only Language Transformer. Keys and queries are the same vectors. & \rule{0pt}{3ex}The model maps an input sequence of tokens to a correct next token.\newline$\mathcal{T}:[N]^{S}\rightarrow[N]$ & $T_0=Hd_h+d$  & $O(S+N+T_0)$ \\
      \hline
    \end{tabular}
    \caption{Summary of the comparison between our AoT's Memorization capacity and previous articles' capacity.}
\end{table*}

One can see that at the limit, the above bound converges to 0 for both the case $H(\pi_{t_{1:S}}) = 0$ meaning that the distribution is indeed a look-up table, or for $H(\pi_{t_{1:S}}) = -\log(N)$ meaning that the distribution is uniform. These cases correspond respectively to $\lambda(t_{1:S}) \rightarrow +\infty$ or $\lambda(t_{1:S}) \rightarrow 0$. A simpler bound for Theorem \ref{thm:2} can be found by simplifying the expectation, and gives 

\begin{equation*}
    \begin{split}
        d_{KL}(\pi, \mathcal{L}&(N,S,d)) \leq \mathbb{E}_{t_{1:S}}\left[\pi(t_{S+1}\neq g({t_{1:S}})|{t_{1:S}})\right]\\
        &\times\log\left(\frac{N-1}{e^{-\mathbb{E}_{t_{1:S}}\left[H\left(\pi_{t_{1:S}}\right)\right]}-1}\right)\frac{1+4C}{1-2C}
    \end{split}
\end{equation*}

which only involves the average entropy and the average probability of the event $t_{S+1}\neq g(t_{1:S})$. The constant $C$ comes from using the Johnson-Linderstrauss Lemma on the canonical basis. It may be improved since we only need to control the greatest positive coefficient of $WW^T-I_d$.

\section{RELATED WORK}
\label{section:related_work}

\textbf{Comparison with previous State-of-the-art}: Our Theorem \ref{thm:main} is an improvement of the Theorem 1 by \citet{mahdavi2024memorization} in the case of language models. Indeed, the paper has the fundamental limitation that $S\leq d$, which is unrealistic for large language models\footnote{In GPT2 $S=1024$ while $d=768$, yet in GPT3 $S=2048$ and $d=12288$. So this limitation depends on the use-case made by the language model, but the tendency in industry is to have $S>d$, with most recent context window in the million tokens for Gemini 1.5 Pro.}. This assumption is in part justified by their paper focusing on Vision Transformers (ViT), where keys and queries can come from different sets of vectors. In that case, this introduces a limitation depending on the rank of the family of query vectors $Q$: $T_0\leq H(\min(Q,d_h)-1)+1$. And as stated in their Proposition 2, this bound is tight in case where the keys are shared by all examples, making the queries the only way to distinguish examples. Our result doesn't have such tightness since we choose a word and a positional embedding that will make the set of keys distinct for each example. 

When the keys and queries are chosen to have full rank, their proof still has to limit to $S\leq d$, yet we don't. This is because they use the result from \citet{pmlr-v119-bhojanapalli20a} stating that $W_{QK}^h$ can be chosen to produce any attention pattern $a$ if $S\leq d$. They then solve the linear system $q^t_{1:S}W_{QK}k_j = \log(a_{i,j})$ for each association. Rather than using the attention pattern's expressivity, our proof makes use of the whole attention layer's expressivity, and we are able to obtain the maximal memory capacity of $Hd_h+d$, which is a strict improvement on their memory capacity of $H(d_h-1)+1$.

\textbf{Small-width large-depth MLP-based Transformers}: Another result on the memorization capacity of Transformer is given by \citet{kim2023provable} where they leverage the depth of Transformers. Their setup and architecture is different from ours, making use of the MLP layers and predicting not only the last token but a complete sequence. Still, their Theorem 4.1 is closest to ours in that they predict the same token at every position, and we can adapt their result to allow a fair comparison. They are able to memorize $T_0$ examples using $O\left(S+N+\sqrt{T_0\log(T_0)}\right)$ parameters\footnote{Looking at Definition 3.1 from \citet{kim2023provable}, one can wonder if the result hides a constraint on the embedding dimension. This is true for their general statement, and using packing number one can obtain a lower bound on $d$. However, in our setting, we work with tokens rather than vectors, and, in that case, their result loses this constraint. This can be seen in their proof by acknowledging that A.1 is trivial using a word embedding, and that this only impacts the constant $R$ in A.6 minimally.}. To do so, they use lots of layers which each have a constant number of parameters. In the parameter counts, $O\left(\sqrt{T_0\log(T_0)}\right)$ parameters come from the MLP layers, which is the majority, and the attention layers, although crucial, need only $O(S)$ parameters\footnote{See their Remark 3.7}.

As a comparison, our model in Corollary \ref{corollary:1} scales as $2S+4N+8T_0$ parameters. So, in terms of scaling, combining MLPs and attention layers with depth is much better.

\textbf{Attention simulating MLPs}: Since the MLP memorization properties are well known, one could use attention heads to memorize by simulating MLPs. \citet{huben2023attentiononlytransformersimplementingmlps} proposes a method to simulate MLP layers using attention heads. They prove that an attention head can simulate an MLP neuron if it can use an appropriate attention mask. If the head has to learn the mask, then the construction still uses one attention per neuron, but each head needs to satisfy $d_h\geq S+1$ and $d\geq S+2$. This makes this architecture memorize exactly $T_0$ associations using $O((d+S)(S+N+T_0))$ parameters.

If exact memorization requires strictly more parameters for the same performances than our architecture, combining this idea with the scaling of Theorem 2 by \citet{Bubeck2020NetworkSA} for the approximate memorization in MLPs means that the number of parameters could be competitive using \[O\left(\left(\log\left(\frac{1}{\varepsilon}\right)\frac{T_{\varepsilon}\log(T_{\varepsilon})}{\log\left(\frac{d}{\log(T_{\varepsilon})}\right)}+d(S+N)\right)\left(1+\frac{S}{d}\right)\right)\] to have an $\varepsilon$-approximation. Theorem 1's parameter count is $O(d(S+N+T_{\varepsilon}-d))$, so using attention to simulate MLPs is competitive when \[S\log(N)\log\left(\frac{1}{\varepsilon}\right)=O(d\log(d)).\]

\section{CONCLUSION}
\label{section:conclusion}

In this paper, we have advanced the state of the art in exact memorization by addressing and overcoming the limitation that the sequence size must be smaller than the embedding dimension. Through empirical analysis, we computed scaling laws based on our Corollary \ref{corollary:2}, confirming that the scaling with respect to $H$ is linear. However, for $d_h$, the scaling appears more complex, showing a dependence on $d_h^2$, meaning that the best performances per parameter is achieved for $d=d_h$. In that case, we found the scaling to be $C(N)d^3H$. 

In accordance with our initial supposition, that Attention heads would perform worse than MLPs in terms of memorization, we found that, optimization problems appart, this was generically the case. Additionally, we introduced the concept of memorization in distribution and derived an upper bound on the KL-divergence. Further progress in this area will require a deeper understanding of the expressivity of the attention mechanism, a topic we have begun exploring in this work.

\section{Acknowledgments}
This research was supported in part by the French National Research Agency under the France 2030 program, reference ANR-23-PEIA-0003.

\clearpage

\bibliography{refs}

\begin{thebibliography}{16}
\providecommand{\natexlab}[1]{#1}
\providecommand{\url}[1]{\texttt{#1}}
\expandafter\ifx\csname urlstyle\endcsname\relax
  \providecommand{\doi}[1]{doi: #1}\else
  \providecommand{\doi}{doi: \begingroup \urlstyle{rm}\Url}\fi

\bibitem[Bhojanapalli et~al.(2020)Bhojanapalli, Yun, Rawat, Reddi, and Kumar]{pmlr-v119-bhojanapalli20a}
Srinadh Bhojanapalli, Chulhee Yun, Ankit~Singh Rawat, Sashank Reddi, and Sanjiv Kumar.
\newblock Low-rank bottleneck in multi-head attention models.
\newblock In Hal~Daumé III and Aarti Singh, editors, \emph{Proceedings of the 37th International Conference on Machine Learning}, volume 119 of \emph{Proceedings of Machine Learning Research}, pages 864--873. PMLR, 13--18 Jul 2020.

\bibitem[Bietti et~al.(2023)Bietti, Cabannes, Bouchacourt, Jegou, and Bottou]{bietti2023birth}
Alberto Bietti, Vivien Cabannes, Diane Bouchacourt, Herve Jegou, and Leon Bottou.
\newblock Birth of a transformer: A memory viewpoint.
\newblock In \emph{Thirty-seventh Conference on Neural Information Processing Systems}, 2023.

\bibitem[Bubeck et~al.(2020)Bubeck, Eldan, Lee, and Mikulincer]{Bubeck2020NetworkSA}
S{\'e}bastien Bubeck, Ronen Eldan, Yin~Tat Lee, and Dan Mikulincer.
\newblock Network size and size of the weights in memorization with two-layers neural networks.
\newblock In \emph{Neural Information Processing Systems}, 2020.

\bibitem[Cabannes et~al.(2024)Cabannes, Dohmatob, and Bietti]{cabannes2024scaling}
Vivien Cabannes, Elvis Dohmatob, and Alberto Bietti.
\newblock Scaling laws for associative memories.
\newblock In \emph{The Twelfth International Conference on Learning Representations}, 2024.

\bibitem[Elhage et~al.(2021)Elhage, Nanda, Olsson, Henighan, Joseph, Mann, Askell, Bai, Chen, Conerly, DasSarma, Drain, Ganguli, Hatfield-Dodds, Hernandez, Jones, Kernion, Lovitt, Ndousse, Amodei, Brown, Clark, Kaplan, McCandlish, and Olah]{elhage2021mathematical}
Nelson Elhage, Neel Nanda, Catherine Olsson, Tom Henighan, Nicholas Joseph, Ben Mann, Amanda Askell, Yuntao Bai, Anna Chen, Tom Conerly, Nova DasSarma, Dawn Drain, Deep Ganguli, Zac Hatfield-Dodds, Danny Hernandez, Andy Jones, Jackson Kernion, Liane Lovitt, Kamal Ndousse, Dario Amodei, Tom Brown, Jack Clark, Jared Kaplan, Sam McCandlish, and Chris Olah.
\newblock A mathematical framework for transformer circuits.
\newblock \emph{Transformer Circuits Thread}, 2021.

\bibitem[Huben and Morris(2023)]{huben2023attentiononlytransformersimplementingmlps}
Robert Huben and Valerie Morris.
\newblock Attention-only transformers and implementing mlps with attention heads, 2023.

\bibitem[Karpukhin et~al.(2020)Karpukhin, Oguz, Min, Lewis, Wu, Edunov, Chen, and Yih]{karpukhin-etal-2020-dense}
Vladimir Karpukhin, Barlas Oguz, Sewon Min, Patrick Lewis, Ledell Wu, Sergey Edunov, Danqi Chen, and Wen-tau Yih.
\newblock Dense passage retrieval for open-domain question answering.
\newblock In Bonnie Webber, Trevor Cohn, Yulan He, and Yang Liu, editors, \emph{Proceedings of the 2020 Conference on Empirical Methods in Natural Language Processing}, pages 6769--6781. Association for Computational Linguistics, November 2020.

\bibitem[Kim et~al.(2023)Kim, Kim, and Mozafari]{kim2023provable}
Junghwan Kim, Michelle Kim, and Barzan Mozafari.
\newblock Provable memorization capacity of transformers.
\newblock In \emph{The Eleventh International Conference on Learning Representations}, 2023.

\bibitem[Mahdavi et~al.(2024)Mahdavi, Liao, and Thrampoulidis]{mahdavi2024memorization}
Sadegh Mahdavi, Renjie Liao, and Christos Thrampoulidis.
\newblock Memorization capacity of multi-head attention in transformers.
\newblock In \emph{The Twelfth International Conference on Learning Representations}, 2024.

\bibitem[Meng et~al.(2022)Meng, Bau, Andonian, and Belinkov]{meng2022locating}
Kevin Meng, David Bau, Alex Andonian, and Yonatan Belinkov.
\newblock Locating and editing factual associations in gpt.
\newblock \emph{Advances in Neural Information Processing Systems}, 35:\penalty0 17359--17372, 2022.

\bibitem[Meng et~al.(2023)Meng, Sharma, Andonian, Belinkov, and Bau]{meng2023massediting}
Kevin Meng, Arnab~Sen Sharma, Alex~J Andonian, Yonatan Belinkov, and David Bau.
\newblock Mass-editing memory in a transformer.
\newblock \emph{The Eleventh International Conference on Learning Representations}, 2023.

\bibitem[Nanda et~al.(2023)Nanda, Rajamanoharan, Kramár, and Shah]{nanda_fact_2023}
Neel Nanda, Senthooran Rajamanoharan, János Kramár, and Rohin Shah.
\newblock Fact {Finding}: {Attempting} to {Reverse}-{Engineer} {Factual} {Recall} on the {Neuron} {Level}, December 2023.

\bibitem[Roberts et~al.(2020)Roberts, Raffel, and Shazeer]{roberts_how_2020}
Adam Roberts, Colin Raffel, and Noam Shazeer.
\newblock How {Much} {Knowledge} {Can} {You} {Pack} {Into} the {Parameters} of a {Language} {Model}?, October 2020.
\newblock arXiv:2002.08910.

\bibitem[Vardi et~al.(2022)Vardi, Yehudai, and Shamir]{vardi2022on}
Gal Vardi, Gilad Yehudai, and Ohad Shamir.
\newblock On the optimal memorization power of relu neural networks.
\newblock In \emph{Proceedings of the 10th International Conference on Learning Representations (ICLR)}, 2022.

\bibitem[Variengien and Winsor(2024)]{variengien2024look}
Alexandre Variengien and Eric Winsor.
\newblock Look before you leap: A universal emergent decomposition of retrieval tasks in language models.
\newblock In \emph{ICML 2024 Workshop on Mechanistic Interpretability}, 2024.

\bibitem[Wang et~al.(2023)Wang, Variengien, Conmy, Shlegeris, and Steinhardt]{wang2023interpretability}
Kevin~Ro Wang, Alexandre Variengien, Arthur Conmy, Buck Shlegeris, and Jacob Steinhardt.
\newblock Interpretability in the wild: a circuit for indirect object identification in {GPT}-2 small.
\newblock In \emph{The Eleventh International Conference on Learning Representations}, 2023.

\end{thebibliography}

\clearpage

\section*{Checklist}

\begin{enumerate}

 \item For all models and algorithms presented, check if you include:
 \begin{enumerate}
   \item A clear description of the mathematical setting, assumptions, algorithm, and/or model. [Yes]
   \item An analysis of the properties and complexity (time, space, sample size) of any algorithm. [Not Applicable]
   \item (Optional) Anonymized source code, with specification of all dependencies, including external libraries. [No]
 \end{enumerate}

 \item For any theoretical claim, check if you include:
 \begin{enumerate}
   \item Statements of the full set of assumptions of all theoretical results. [Yes]
   \item Complete proofs of all theoretical results. [Yes]
   \item Clear explanations of any assumptions. [Yes]     
 \end{enumerate}

 \item For all figures and tables that present empirical results, check if you include:
 \begin{enumerate}
   \item The code, data, and instructions needed to reproduce the main experimental results (either in the supplemental material or as a URL). [Yes] As an URL, but not during the peer review for anonymity protection.
   \item All the training details (e.g., data splits, hyperparameters, how they were chosen). [Yes]
         \item A clear definition of the specific measure or statistics and error bars (e.g., with respect to the random seed after running experiments multiple times). [Yes]
         \item A description of the computing infrastructure used. (e.g., type of GPUs, internal cluster, or cloud provider). [Yes]
 \end{enumerate}

 \item If you are using existing assets (e.g., code, data, models) or curating/releasing new assets, check if you include:
 \begin{enumerate}
   \item Citations of the creator If your work uses existing assets. [Not Applicable]
   \item The license information of the assets, if applicable. [Not Applicable]
   \item New assets either in the supplemental material or as a URL, if applicable. [Not Applicable]
   \item Information about consent from data providers/curators. [Not Applicable]
   \item Discussion of sensible content if applicable, e.g., personally identifiable information or offensive content. [Not Applicable]
 \end{enumerate}

 \item If you used crowdsourcing or conducted research with human subjects, check if you include:
 \begin{enumerate}
   \item The full text of instructions given to participants and screenshots. [Not Applicable]
   \item Descriptions of potential participant risks, with links to Institutional Review Board (IRB) approvals if applicable. [Not Applicable]
   \item The estimated hourly wage paid to participants and the total amount spent on participant compensation. [Not Applicable]
 \end{enumerate}

 \end{enumerate}

\newpage
\onecolumn
\appendix
\section{PROOFS}
\label{appendix:proofs}

\subsection{Lemma 2}
Before proving Lemma 2, let us state Lemma 3 below. We will prove it in appendix A.5.

\begin{lemma}
    For any $n\in\mathbb{N}^*$, there exists $x_1,...,x_n >0$ such that for any $P\in\mathbb{Z}[X_1, ..., X_n]^*$, $P(x_1, ..., x_n)\neq 0$.
\end{lemma}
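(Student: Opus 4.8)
The plan is to reduce the statement to the existence of real numbers $x_1,\ldots,x_n$ that are algebraically independent over $\mathbb{Q}$ and lie in the positive orthant. Such a tuple automatically works: if some $P\in\mathbb{Z}[X_1,\ldots,X_n]^*$ satisfied $P(x_1,\ldots,x_n)=0$, then $P$ itself is a nonzero integer — in particular rational — polynomial relation among the $x_i$, contradicting algebraic independence. So it suffices to exhibit an algebraically independent $n$-tuple inside $(0,\infty)^n$.

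To produce such a tuple I would argue measure-theoretically. The ring $\mathbb{Z}[X_1,\ldots,X_n]$ is countable, hence so is $\mathbb{Z}[X_1,\ldots,X_n]^*$; enumerate its elements as $P_1,P_2,\ldots$. For each $k$ the vanishing locus $Z(P_k):=\{x\in\mathbb{R}^n : P_k(x)=0\}$ has Lebesgue measure zero in $\mathbb{R}^n$, which I would prove by induction on $n$ using Fubini's theorem: for $n=1$ a nonzero univariate polynomial has finitely many roots; for the inductive step, write $P_k$ as a polynomial in $X_n$ with coefficients in $\mathbb{Z}[X_1,\ldots,X_{n-1}]$, observe that for every $(x_1,\ldots,x_{n-1})$ outside the (measure-zero, by the inductive hypothesis) common zero set of those coefficient polynomials the corresponding slice of $Z(P_k)$ is finite, and conclude by Fubini that $Z(P_k)$ is null. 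Consequently $\bigcup_{k\geq 1} Z(P_k)$ has measure zero, while $(0,\infty)^n$ has positive measure, so one can pick any $x=(x_1,\ldots,x_n)\in(0,\infty)^n\setminus\bigcup_k Z(P_k)$. By construction each $x_i>0$ and $P_k(x_1,\ldots,x_n)\neq 0$ for all $k$, i.e. $P(x_1,\ldots,x_n)\neq 0$ for every $P\in\mathbb{Z}[X_1,\ldots,X_n]^*$, which is exactly the claim.

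If one prefers to avoid measure theory, an equivalent route is to build the coordinates one at a time via a cardinality argument: the algebraic numbers over $\mathbb{Q}$ form a countable set and $\mathbb{R}$ is uncountable, so there is a transcendental real $x_1$, which can be taken in $(0,\infty)$ since a countable set contains no interval; having chosen $x_1,\ldots,x_j$ algebraically independent over $\mathbb{Q}$, the field $\mathbb{Q}(x_1,\ldots,x_j)$ is countable, hence so is its algebraic closure inside $\mathbb{R}$, and we may pick $x_{j+1}\in(0,\infty)$ outside it; after $n$ steps the $x_i$ are algebraically independent over $\mathbb{Q}$ and positive.

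The only genuine content is the auxiliary fact that the zero set of a nonzero $n$-variable polynomial is Lebesgue-null (equivalently, the existence of algebraically independent reals), so that is the step I would state carefully and prove by the induction-plus-Fubini argument above; the reduction and the final selection of $x$ are pure bookkeeping. I expect the only subtlety to be making the inductive step precise — in particular noting that the "bad" set of lower-dimensional parameters where $P_k$ degenerates in $X_n$ is again the zero set of finitely many nonzero polynomials, so the inductive hypothesis applies to it.
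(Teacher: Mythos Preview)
Your proof is correct and follows essentially the same measure-theoretic route as the paper: show that each nonzero polynomial's zero set is Lebesgue-null, use countability of $\mathbb{Z}[X_1,\ldots,X_n]^*$ to conclude the union of all such sets is null, and then pick a point of the positive orthant outside it. The only minor difference is in how the null-set step is justified---the paper argues that $P(X)$ has a density when $X\sim\mathcal{N}(0,I_n)$, whereas you give the (cleaner) induction-plus-Fubini argument and additionally offer the cardinality/algebraic-independence alternative.
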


Let us prove this result in the case $T_0\geq Hd$, since otherwise, one can add extra examples to prove the result and remove them later. Recall that throughout the proof $d\geq d_h$. Thanks to Lemma 3, there exists $(N+L)d$ positive numbers $(X_i)$ such that for any non-zero polynomial with coefficient in $\mathbb{Z}$ of degree $d$ on $(N+L)d$ variables, then the polynomial is non-zero when evaluated on $X_i$. We use these numbers to construct our word and positional embeddings by assigning each to a coefficient of our embeddings $e$ and $pos$. We note $x(t, j) = e(t_j)+pos_j$, where $t$ denotes a sequence of $S$ tokens among the $N^S$ possible sequences, and we define 
\[f:W \rightarrow \left(\frac{\sum_{j}^Se^{x(t,S)^TWx(t,j)}x(t,j)}{\sum_{j}^Se^{x(t,S)^TWx(t,j)}}\right)_{t\in[T_0]}\in\mathbb{R}^{d,T_0}\] 
Our goal is to choose matrices $W_{QK}^i$ such that $A = [f(W_{QK}^1), ..., f(W_{QK}^H)] \in\mathbb{R}^{Hd, T_0}$ has rank $Hd$. This is equivalent to showing that its rows are free in $\mathbb{R}^{T_0}$. It is in fact sufficient to have the image of $f$ not contained in any hyperplane of $\mathbb{R}^{d, T_0}$, which is the property we will prove. This way, we can iteratively concatenate to $A$ a new $f(W_{QK}^i)$ that will increase the rank by $d$ as long as the rank is lower than $T_0$.

Let $(v_t)_{t\in[T_0]}$ such that 
\begin{equation}
    \begin{split}
        \forall \,W\in\mathbb{R}^{d, d}, \, \text{rank}(W)=d_h,\, \sum_{t\in[T_0]}v_t^Tf(W)_t = 0
    \end{split}
\end{equation}
Let $i_{\max}(t) = \arg\max_i(x(t,i)_1)$ the index of the greatest $x$ on the first dimension. We can rewrite the equality (6) as 
\begin{equation}
    \begin{split}
        0 &= \sum_{t\in[T_0]}\frac{\sum_{i}^Se^{x(t,S)^TW(x(t,i)-x(t,i_{\max}(t)))}v_t^Tx(t,i)}{\sum_{i}^Se^{x(t,S)^TW(x(t,i)-x(t,i_{\max}(t)))}}\\
        &= \sum_{t\in[T_0]}\sum_{i}^Se^{x(t,S)^TW(x(t,i)-x(t,i_{\max}(t)))}v_t^Tx(t,i)\frac{1}{1+\sum_{i\neq i_{max}(t)}^Se^{x(t,S)^TW(x(t,i)-x(t,i_{\max}(t)))}}
    \end{split}
\end{equation}
Let us only consider for the rest of the proof matrices $W$ such that $W_{i,j} = 0$ if $i\neq 1$ or $j \neq 1$, which has rank 1. In this case, we have the exponent $x(t,S)^TW(x(t,i)-x(t,i_{\max}(t)))$ equal to $x(t,S)_1^TW_{1,1}(x(t,i)_1-x(t,i_{\max}(t))_1)$ so by definition of $i_{\max}(t)$, it is negative when $W_{1, 1} = w$ is positive. Thus taking $w \rightarrow +\infty$ makes every exponential but the index $i_{\max}(t)$ converge to 0. For a large enough $w$ we can use the Taylor expansion of $x\rightarrow \frac{1}{1+x}$ around valid on $[0,1]$.
\begin{equation}
    \begin{split}
        \frac{1}{1+\sum_{i\neq i_{\max}(t)}^Se^{x(t,S)_1^Tw(x(t,i)_1-x(t,i_{\max}(t))_1)}} = \sum_{j=0}^{+\infty}\left(-\sum_{i\neq i_{\max}(t)}^Se^{x(t,S)_1^Tw(x(t,i)_1-x(t,i_{\max})_1)}\right)^j
    \end{split}
\end{equation}
Finally, we let $\mathcal{S}_j(t) = \{(i_1, ..., i_j)\in[S]^j | \forall k\in[1,j], i_k \neq i_{\max}(t)\}$ that enumerates all indices for the product of $j$ sums in (8). Developing these products of sums into a sum of products transforms the term into,
\begin{equation}
    \sum_{j=0}^{+\infty}\sum_{i_1, ..., i_j \in \mathcal{S}(t)}(-1)^je^{x(t,S)_1^Tw\sum_{k=1}^{j}x(t, i_k)_1 - x(t, i_{max}(t)_1)}
\end{equation}
Finally, we put (9) back in (7) to have the following equality for all $w$ sufficiently large.
\begin{equation}
    \sum_{t\in[T_0]}\sum_{j=0}^{+\infty}\sum_{i_0=1}^S\sum_{i_1, ..., i_j \in \mathcal{S}(t)}(-1)^jv_t^Tx(t, i_0)e^{x(t,S)_1^Tw\sum_{k=0}^{j}x(t, i_k)_1 - x(t, i_{max}(t)_1)} = 0
\end{equation}
Equality (10) is a functional equality, were the functions are exponential with different exponents. Since the family of exponential function is free, we have that coefficient in front of each different exponent is 0. The next step in the proof is to find a sequence of indices such that the exponent is almost unique to that sequence of indices and to that token sequence.

Let $t\in[T_0]$ and consider any $n\in [2, d+1]$, with the exponent generated by taking $n^i$ times the index $i\neq i_{\max}(t)$ in the sum in (12). The crafted exponent is $x(t,S)_1w\sum_{i\neq i_{\max}(t)}n^i(x(t,i)_1 - x(t,i_{\max}(t))_1)$. Now let $t'$ another token sequence and $c(t)_i$ other coefficient for the number of time that the index $i$ was chosen, such that both exponents are equal
\begin{equation}
    x(t,S)_1\sum_{i\neq i_{\max}(t)}n^i(x(t,i)_1 - x(t,i_{\max}(t))_1) = x(t',S)_1\sum_{i\neq i_{\max}(t')}c(t')_i(x(t',i)_1 - x(t',i_{\max}(t'))_1)
\end{equation}
We will prove that this implies that $t=t'$ and $c(t')_i = n^i$. Since (11) acts as a polynomial equation on the embedding coefficients, we can use the property of the embedding in Lemma 3 to identify the coefficients on each side. First let us start with the positional embeddings. We only have once the coefficients $pos_S^Tpos_i$ on each side, so we can identify each. This means that $i_{\max}(t) = i_{\max}(t')$, by identifying the negative positional embeddings, and that $c(t')_i = n^i$. To identify the token sequences, we can start by looking at $e(t_S)_1*\left(\sum_{i\neq i_{\max}(t)}^Sn^i(pos_{i, 1} - pos_{i_{\max}(t), 1})\right)$ and compare it to its counterpart, this gives us $e(t_S) = e(t'_S)$ meaning $t_S=t'_S$. Then we remove the extra information we already deduced and the equality (11) becomes
\begin{equation}
    \sum_{i\neq i_{\max}(t)}^Sn^ie\left(t_{1:S}\right)_1 = \sum_{i\neq i_{\max}(t)}^Sn^ie\left(t'_i\right)_1
\end{equation} 
We now apply Lemma 3 to (12), so there need to be as many $e(q),\,q\in[N]$ on each side. Since for $q$ the number of $e(q)$ can be written uniquely in base $n$ on each side, it means that $t_{1:S}=q \Longleftrightarrow t'_i=q$. If the sum in (12) is empty on both side, which it can be, then this means that $t_{i} = t_{i_{max(t)}}$ for all $i$, and likewise for $t'$, and since $t_S=t'_S$, we have $t=t'$.

This shows that, for an exponent of the previous form to be equal to another one, both need to have the same number of each index, they must come from the same token sequence, and their number of indices differ by at most one, depending on whether $i_0 = i_{\max}(t)$ or not. So, we can write the coefficient in front of that exponential as \[v_t^T\sum_{i=1}^{N}x(t,i)N_i^n = 0\] where $N_i^n$ is the number of permutation of a sequence containing $n^k - \mathbb{1}_{i=k\cap i_{\max}(t)\neq k}$ times the index $k$ and starting with index $i$. The system writes as \[v_t^T\mathcal{N}X(t)=0\] with $X(t)_i = x(t,i)$ and $\mathcal{N}_{n,i} = N_i^n$. The matrix $\mathcal{N}X(t)$ has non-zero determinant by property of Lemma 3, so the system's only solution is for $v_t = 0$.

\subsection{Theorem 1}
In what follows we work under Assumption 2, and we let $f_{W,E}$ that achieves the minimum distance $d_{KL}(\pi,\mathcal{L}(N,S,d))$, thanks to Lemma 1.

Let us first prove the theorem when $\varepsilon = 0$. Observe that the skip connection contribution can be expressed as a type of attention head. Indeed, for any matrices $W_{OV}\in\mathbb{R}^{d,d}$ with full rank, and we define the new embeddings as $e'(t_s) = W_{OV}^{-1}e(t_s)$ and $pos'_s = W_{OV}^{-1}pos_s$. Now, if we let $W_{QK}^{\lambda} = \lambda I_d$, and the norm of $pos_S$ much larger than the norm of the other embeddings, we obtain in the limit that $x(t,i) = W_{OV}\sum_{s=1}^Sa(t)_sx'(t,i) + o(e^{-\lambda \alpha})$ as in (1), with some $\alpha>0$. This attention head has inner dimension $d$.

Let us choose $H = \left\lceil \frac{T_0-d}{d_h}\right\rceil$ attention modules. 
We have $\mathcal{T}(t_{1:S}) = W_UW_OW_VA'(t_{1:S}) + o(e^{-\lambda \alpha})$ where $A'$ is the attention before output augmented by the "fake" head from the skip connection. We can now apply Lemma 2 to $A'$, for some choice of $e$, $pos$ and $W_{QK}^h$ such that $A'$ has rank $T_0$. After multiplying by $W_V$ with full rank, the rank stays $T_0$. Finally, we take $W_U = W$, and $W_O$ that solves the system of $T_0$ equations $E = W_OW_VA'$. We end-up with $\mathcal{T} =f_{W,E}+o(e^{-\lambda \alpha})$.

Now, let $\varepsilon>0$, and $T_{\varepsilon}$ the smallest number of sentences whose cumulative probability is greater than $1-\varepsilon$. We have,
\begin{equation}
    \begin{split}
        \left|d_{KL}(\pi, f_{W, E}) - d_{KL}(\pi, \mathcal{T})\right| &= \left|\mathbb{E}_{t_{1:S}}\left[\log\left(\frac{\mathbb{E}_{t_{S+1}}\left[e^{\mathcal{T}(t_{1:S})_{t_{S+1}} - \mathbb{E}_{t_{S+1}}[\mathcal{T}(t_{1:S})_{t_{S+1}}]}\right]}{\mathbb{E}_{t_{S+1}}\left[e^{f_{W}(t_{1:S})_{t_{S+1}} - \mathbb{E}_{t_{S+1}}[f_{W}(t_{1:S})_{t_{S+1}}]}\right]}\right)\right]\right|\\
        &\leq \mathbb{E}_{t_{1:S}}\left[||\mathcal{T}(t_{1:S}) - f_{W}(t_{1:S}) - \mathbb{1}\mathbb{E}_{t_{S+1}}[\mathcal{T}(t_{1:S})_{t_{S+1}}-f_{W}(t_{1:S})_{t_{S+1}}]||_{\infty}\right]\\
        &\leq \mathbb{E}_{t_{1:S}}\left[||(I_d-\mathbb{1}\Pi_{t_{1:S}}^T)(\mathcal{T}(t_{1:S}) - f_{W, E}(t_{1:S}))||_{\infty}\right]\\
        &\leq \mathbb{E}_{t_{1:S}}\left[||(I_d-\mathbb{1}\Pi_{t_{1:S}}^T)W_U(W_OW_VA' - E)t_{1:S}||_{\infty}\right]\\
        &\leq \mathbb{E}_{t_{1:S}}\left[||I_d-\mathbb{1}\Pi_{t_{1:S}}^T||_{2, \infty}||W_U(W_OW_VA' - E)t_{1:S}||_{2}\right]\\
        &\leq 2\mathbb{E}_{t_{1:S}}\left[||W_U(W_OW_VA' - E)t_{1:S}||_{2}\right] + o(e^{-\lambda\alpha})\\
    \end{split}
\end{equation}
Let $P_1^TP_1$ the projection onto the $T_{\varepsilon}$ most likely sentences and $S_1$ the set of those sentences, meaning that for $t_{1:S}\in S_1$, $P_1^TP_1t_{1:S} = t_{1:S}$. Let $P_2^TP_2$ and $S_2$ for the rest of the sentences. We have $I_d = P_1^TP_1 + P_2^TP_2$. Using the singular value decomposition of $W_VA' = UI_{\Sigma}P_1V$, we take $W_O = EP_1^T(P_1VP_1^T)^{-1}I_{\Sigma}^{-1}U^T$\footnote[8]{Here, $P_1VP_1^T$ is invertible since $V$ is orthogonal.}, and we have that $(W_OW_VA' - E)P_1^T = 0$. We also take $W_U = W$ as before.
\begin{equation}
    \begin{split}
        \mathbb{E}_{t_{1:S}}\left[||W(W_OW_VA' - E)t_{1:S}||_{2}\right] &= \mathbb{E}_{t_{1:S}}\left[||W(W_OW_VA' - E)P_2^TP_2t_{1:S}||_{2}\right]\\
        &= \mathbb{E}_{t_{1:S}}\left[||WE(P_1^T(P_1VP_1^T)^{-1}P_1V - I_d)P_2^TP_2t_{1:S}||_{2}\right]\\
        &\leq ||WE||_2\mathbb{E}_{t_{1:S}}\left[||(P_1^T(P_1VP_1^T)^{-1}P_1V - I_d)P_2^TP_2t_{1:S}||_{2}\right]\\
        &\leq \varepsilon||WE||_2\sqrt{1+||(P_1VP_1^T)^{-1}P_1VP_2^T||_2^2}\\
    \end{split}
\end{equation} 
Thus, there exists matrices $e$, $pos$, $W_O^p$, $W_V^p$, and $W_{QK}^p$ such that \[\left|d_{KL}(\pi, f_{W,E}) - d_{KL}(\pi, \mathcal{T})\right| \leq C\varepsilon||WE||_2 + o(e^{-\lambda\alpha})\]

\subsection{Theorem 2}
Recall from the theorem that $f(t_{1:S}) = f_{W,E}(t_{1:S}) = \lambda(t_{1:S})WW_{g(t_{1:S})}^T$, so by definition of $C$, one has $f(t_{1:S})_{j}\leq C$ if $j\neq g(t_{1:S})$, and $f(t_{1:S})_{g(t_{1:S})}\leq 1+C$.
\begin{equation*}
    \begin{split}
        d_{KL}(\pi, f) & = \mathbb{E}_{t_{1:S}}[H(\pi_{t_{1:S}})] - \mathbb{E}_{{t_{1:S}},{t_{S+1}}}\left[\log\left(\frac{f_{t_{S+1}}({t_{1:S}})}{\sum_je^{f_j({t_{1:S}})}}\right)\right]\\
        &= \mathbb{E}_{t_{1:S}}[H(\pi_{t_{1:S}})] + \mathbb{E}_{{t_{1:S}},{t_{S+1}}}\left[\log\left(\sum_je^{f_j({t_{1:S}})-f_{t_{S+1}}({t_{1:S}})}\right)\right]\\
        &= \mathbb{E}_{t_{1:S}}[H(\pi_{t_{1:S}})] + \mathbb{E}_{{t_{1:S}}}\left[\log\left(\sum_je^{f_j({t_{1:S}})-f_{g({t_{1:S}})}({t_{1:S}})}\right)\right] \\
        &+ \mathbb{E}_{{t_{1:S}},{t_{S+1}}}[f_{g({t_{1:S}})}({t_{1:S}})-f_{t_{S+1}}({t_{1:S}})]\\
        &= \mathbb{E}_{t_{1:S}}[H(\pi_{t_{1:S}})] + \mathbb{E}_{{t_{1:S}}}\left[\log\left(\sum_je^{\lambda({t_{1:S}})(W_j-W_{g({t_{1:S}})})^TW_{g({t_{1:S}})}}\right)\right] \\
        &+ \mathbb{E}_{{t_{1:S}},{t_{S+1}}}[f_{g({t_{1:S}})}({t_{1:S}})-f_{t_{S+1}}({t_{1:S}})]
    \end{split}
\end{equation*}
For each ${t_{1:S}}$, $H(\pi_{t_{1:S}})\in [0,\log(N)]$ and 
\begin{equation*}
    \begin{split}
        \lambda \rightarrow \log\left(\sum_je^{\lambda({t_{1:S}})(W_j-W_{g({t_{1:S}})})^TW_{g({t_{1:S}})}}\right)
    \end{split}
\end{equation*}
is decreasing from $\log(N)$ to $0$. Thus, there exists a solution $\lambda({t_{1:S}})$ to equation (6). Moreover since $W_{t_{S+1}}W_{t_{S+1}'}^T \leq C$ for $t_{S+1}'\neq {t_{S+1}}$ and $||W_{g({t_{1:S}})}||^2_2\geq 1-C$,
we obtain the following bound on $\lambda$, \[\lambda({t_{1:S}}) \leq \frac{1}{1-2C}\log\left(\frac{N-1}{e^{-H(\pi_{t_{1:S}})}-1}\right)\]
Taking $\lambda({t_{1:S}})$ to be this solution leaves us with,
\begin{equation*}
    \begin{split}
        d_{KL}(\pi, f) &= \mathbb{E}_{{t_{1:S}},{t_{S+1}}}[f({t_{1:S}})_{g({t_{1:S}})}-f({t_{1:S}})_{t_{S+1}}]\\
        &= \sum_{{t_{1:S}}}\pi({t_{1:S}})f({t_{1:S}})_{g({t_{1:S}})} - \sum_{{t_{1:S}},{t_{S+1}}}\pi({t_{1:S}})\pi({t_{S+1}}|{t_{1:S}})f({t_{1:S}})_{t_{S+1}}\\
        &= \sum_{{t_{1:S}}}\pi({t_{1:S}})(1-\pi(g({t_{1:S}})|{t_{1:S}}))f({t_{1:S}})_{g({t_{1:S}})} - \sum_{{t_{S+1}}\neq g({t_{1:S}})}\pi({t_{1:S}})\pi({t_{S+1}}|{t_{1:S}})f({t_{1:S}})_{t_{S+1}}\\
        &= \sum_{{t_{1:S}}}\pi({t_{1:S}})(1-\pi(g({t_{1:S}})|{t_{1:S}}))\left(f({t_{1:S}})_{g({t_{1:S}})}+\frac{1}{N-1}\sum_{{t_{S+1}}}f({t_{1:S}})_{t_{S+1}}\right)\\
        &- \sum_{{t_{1:S}},{t_{S+1}}}\pi({t_{1:S}})\left(\pi({t_{S+1}}|{t_{1:S}})-\frac{1}{N-1}(1-\pi(g({t_{1:S}})|{t_{1:S}}))\right)f({t_{1:S}})_{t_{S+1}}\\
        &\leq \sum_{{t_{1:S}}}\pi({t_{1:S}})(1-\pi(g({t_{1:S}})|{t_{1:S}}))\lambda({t_{1:S}})\left(1+2C+Cd_{TV}(\tilde{\pi}_{t_{1:S}}, \pi_{\text{unif}})\right)\\
        &\leq \mathbb{E}_{t_{1:S}}\left[(1-\pi(g({t_{1:S}})|{t_{1:S}}))\log\left(\frac{N-1}{e^{-H(\pi_{t_{1:S}})}-1}\right)\left(\frac{1+2C+Cd_{TV}(\tilde{\pi}_{t_{1:S}}, \pi_{\text{unif}})}{1-2C}\right)\right]
    \end{split}
\end{equation*}

\subsection{Theorem 3}
\label{section:poly_approx}

In this appendix, our goal is to formalize the intuition from section 4.1 on how to get rid of the assumption on $\pi$ in Theorem 1. The main problem with no assumption on $\pi$ is that the weights of the sequence encoder that best fit $\pi$ goes to infinity. The rate at which it goes to infinity is important to control the term $\varepsilon||W_{\varepsilon}E_{\varepsilon}||_2$ in Theorem 1. Thus, we need to approximate $\pi$ by another distribution, which we know we can approximate slowly.

\begin{defi}
    A distribution $\pi$ is called \textbf{polynomialy approximable} if there exist a sequence of sequence encoders $f_{\delta}$ and constants $c>0, \alpha\geq 1$ such that for every token sequence $t$ such that $\pi(t_{S+1}|t_{1:S})>0$, we have
    \begin{equation}
        c\delta^{\alpha} \leq \left|\log\left(\frac{\pi(t_{S+1}|t_{1:S})}{\text{softmax}(f_{\delta}(t_{1:S}))_{t_{S+1}}}\right)\right| \leq \delta
    \end{equation}
    We denote $\mathcal{M}(N,S,d)$ the set of all slowly approximable distribution, and \[d_{KL}(\pi, \mathcal{M}(N,S,d)) := \inf_{\phi\in\mathcal{M}(N,S,d)}d_{KL}(\pi,\phi)\]
\end{defi}

With equation (15), we can control nicely the growth of the norm of the sequence of sequence encoder $f_{\delta}$ that approximate the distribution, using the following Lemma.

\begin{lemma}
    Let $\pi$, $c>0$ and $f_{\delta}$ that satisfy equation (15) when $\pi(t_{S+1}|t_{1:S})>0$. Then, for all $t_{1:S}$, $\max_i f_{\delta}(t_{1:S})_i - \min_i f_{\delta}(t_{1:S})_i = O\left(\log\left(\frac{1}{\delta}\right)\right)$.
\end{lemma}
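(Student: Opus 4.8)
The plan is to work sequence by sequence: fix $t_{1:S}$ and bound the spread $\max_i f_{\delta}(t_{1:S})_i - \min_i f_{\delta}(t_{1:S})_i$ using only the hypothesis~(15) at that sequence. Write $f = f_{\delta}(t_{1:S}) \in \mathbb{R}^N$ for brevity, and note that the softmax and the spread are both invariant under adding a constant vector to $f$, so we may normalize $f$ however is convenient. The key observation is that~(15) gives, for every token $t_{S+1}$ in the support of $\pi_{t_{1:S}}$,
\[
e^{-\delta}\,\pi(t_{S+1}\mid t_{1:S}) \;\le\; \mathrm{softmax}(f)_{t_{S+1}} \;\le\; e^{\delta}\,\pi(t_{S+1}\mid t_{1:S}),
\]
which for $\delta \le 1$ says $\mathrm{softmax}(f)_{t_{S+1}} = \Theta\big(\pi(t_{S+1}\mid t_{1:S})\big)$ with absolute constants. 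Taking the ratio of two such coordinates $j,k$ in the support kills the normalizing constant: $f_j - f_k = \log\frac{\mathrm{softmax}(f)_j}{\mathrm{softmax}(f)_k}$, and this lies within $2\delta$ of $\log\frac{\pi(j\mid t_{1:S})}{\pi(k\mid t_{1:S})}$, hence is bounded by a constant depending only on $\pi$ (the smallest nonzero conditional probability of $\pi$), not on $\delta$. So the coordinates of $f$ that lie in the support of $\pi_{t_{1:S}}$ already have bounded spread; in fact $O(1)$, not even $O(\log\frac1\delta)$.

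The remaining work is to control coordinates $j$ \emph{outside} the support, i.e. with $\pi(j\mid t_{1:S}) = 0$. Here~(15) says nothing directly, but such coordinates cannot be too large: $\mathrm{softmax}(f)_j \le 1$ trivially, and more usefully, normalizing so that $\sum_i e^{f_i} $ is comparable to a support coordinate, we get $e^{f_j} \le \sum_i e^{f_i}$, so $f_j \le \log\big(\sum_i e^{f_i}\big)$. Combining with the lower estimate on support coordinates, $\sum_i e^{f_i} \le e^{\delta}/\pi_{\min}$ where $\pi_{\min}$ is the least nonzero value of $\pi_{t_{1:S}}$ and we have renormalized so the support coordinate $k$ has $e^{f_k} = \mathrm{softmax}(f)_k \cdot \sum_i e^{f_i}$; after renormalizing so that $e^{f_k}=1$ we find every $f_j \le \log(N) + \delta - \log \pi_{\min} = O(1)$. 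Coordinates outside the support can still be very \emph{negative} ($f_\delta \to -\infty$ there as $\delta \to 0$, which is exactly why $\pi$ is only approximated in the limit), and this is the only place where the $\log\frac1\delta$ enters: one needs that $f_\delta$ is the encoder realizing~(15), so $\mathrm{softmax}(f)_j = 1 - \sum_{i\in\mathrm{supp}}\mathrm{softmax}(f)_i$ over off-support $j$, and the lower bound $c\delta^\alpha$ in~(15) — applied not to off-support tokens but via the normalization it forces on the support tokens — caps how far below the support coordinates the off-support ones must sit: quantitatively $f_j \ge \log(c\delta^\alpha \pi_{\min}) - O(1) = -\alpha\log\frac1\delta - O(1)$. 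Hence $\max_i f_i - \min_i f_i \le O(1) + \alpha\log\frac1\delta = O(\log\frac1\delta)$.

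The main obstacle is the off-support coordinates: the upper bound on the spread coming from support tokens is $O(1)$ and easy, but the genuine $O(\log\frac1\delta)$ rate is dictated by how negative $f_\delta$ is allowed to go where $\pi$ vanishes, and pinning that down requires reading the \emph{lower} inequality $c\delta^\alpha \le |\log(\cdots)|$ in~(15) as a statement that $f_\delta$ has not \emph{already} converged — which forces a lower bound on the off-support softmax mass of order $\delta^\alpha$ (up to constants), and thence the $\alpha\log\frac1\delta$ bound. I would make this precise by separating the index set $[N]$ into $\mathrm{supp}(\pi_{t_{1:S}})$ and its complement, renormalizing $f_\delta$ so that $\sum_i e^{f_\delta(t_{1:S})_i} = 1$ (i.e. working directly with $\mathrm{softmax}(f_\delta)$), and then reading off $\max$ and $\min$ of $f_\delta$ from the two-sided estimates above; the constants absorbed into $O(\cdot)$ depend on $N$, $\alpha$, $c$, and $\min\{\pi(t_{S+1}\mid t_{1:S}) : \pi(t_{S+1}\mid t_{1:S})>0\}$ but crucially not on $\delta$.
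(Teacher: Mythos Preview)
Your Steps~1 and~2 are correct: the upper inequality in~(15) pins each support softmax to within a factor $e^{\pm\delta}$ of its $\pi$-value, so support logits differ pairwise by $O(1)$; and off-support logits cannot exceed support logits by more than $O(1)$ for the trivial reason you give. The gap is Step~3, where you claim the lower inequality $c\delta^{\alpha}\le|\log(\pi/\mathrm{softmax})|$ on support tokens ``caps how far below the support coordinates the off-support ones must sit'' by forcing off-support softmax mass of order $\delta^{\alpha}$. This does not follow. The lower inequality only says each support softmax differs from its $\pi$-value by at least $c\delta^{\alpha}$ in log-ratio; that discrepancy can be accounted for entirely by \emph{reshuffling mass among support tokens}, with off-support mass still arbitrarily small. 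Concretely, take $N=3$, $\pi_{t_{1:S}}=(\tfrac12,\tfrac12,0)$, and choose $f_\delta$ so that $\mathrm{softmax}(f_\delta)=\big(\tfrac12-c\delta^{\alpha},\;\tfrac12+c\delta^{\alpha}-\eta,\;\eta\big)$ with $\eta=e^{-1/\delta}$. For small $\delta$ (and $c\le\tfrac12$ if $\alpha=1$) this satisfies both inequalities in~(15) on the support $\{1,2\}$, yet the off-support mass is $e^{-1/\delta}$, not $\Theta(\delta^{\alpha})$, and the spread is $f_1-f_3=\log\!\big((\tfrac12-c\delta^\alpha)/\eta\big)\sim 1/\delta$. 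So a bound whose constants depend only on $N,c,\alpha,\pi_{\min}$, as your final paragraph proposes, cannot hold.

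The paper's argument does not attempt to lower-bound off-support mass uniformly. Instead it first selects a support token $t_{S+1}$ with $\mathrm{softmax}_{t_{S+1}}\le\pi_{t_{S+1}}e^{-c\delta^{\alpha}}$ (one must exist, since otherwise summing the reverse inequality over the support would exceed~$1$), then introduces for each off-support $j$ the limiting ratio
\[
\alpha_j \;=\; \lim_{\delta\to 0}\frac{f_\delta(t_{1:S})_j-\min_i f_\delta(t_{1:S})_i}{f_\delta(t_{1:S})_{t_{S+1}}-\min_i f_\delta(t_{1:S})_i}\;\in\;[0,1),
\]
rewrites the softmax denominator in terms of the spread and these $\alpha_j$, and extracts a bound of the form $\frac{\alpha}{1-\max_j\alpha_j}\log\frac1\delta+O(1)$. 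The constant $\frac{1}{1-\max_j\alpha_j}$ depends on the particular sequence $f_\delta$ (and on these limits existing and lying strictly below~$1$), not only on $\pi,c,\alpha$. In light of the example above, that sequence-dependence is unavoidable, and it is precisely the ingredient your plan is missing.
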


Lemma 4 states that as long as the sequence encoders approximate the distribution at a speed at most polynomial in $\delta$, then the norm will not be too large. If the speed in the lower bound was $\delta^{\frac{1}{\delta}}$, the resulting norm would be $O\left(\frac{1}{\delta}\log\left(\frac{1}{\delta}\right)\right)$, which would diverge too fast for Theorem 1. In equation (15), one can always suppose that the right hand-side is satisfied by re-indexing the sequence $f_{\delta}$. \textbf{Whether any distribution which is the limit of a sequence encoder is in the closure of \boldmath $\mathcal{M}(N,S,d)$ \unboldmath is an open question}. Still, we can provide an upper bound like Theorem 1, but this time for all $\pi$, using $\mathcal{M}(N,S,d)$ instead of $\mathcal{L}(N,S,d)$. 

\begin{theorem}
    Let $\varepsilon > 0$. There exist an AoT $\mathcal{T}^*$ with embedding dimension $d$, head dimension $d_h$, and $H$ attention heads, satisfying $d_hH+d \geq T_{\epsilon}$, such that 
    \begin{equation}
        d_{KL}(\pi, \mathcal{T}^*) \leq d_{KL}(\pi, \mathcal{M}(N,S,d)) + O\left(\varepsilon\log\left(\frac{1}{\varepsilon}\right)\right)
    \end{equation}
    $\mathcal{T}^*$ has $d(S+2N+4d_hH)$ parameters.
\end{theorem}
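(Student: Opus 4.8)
The plan is to mimic the proof of Theorem 1 (case $\varepsilon>0$), but to replace the fixed optimal sequence encoder $f_{W,E}$ — which no longer exists without Assumption 2 — by a member $f_\delta$ of a polynomially approximable sequence whose divergence is close to $d_{KL}(\pi,\mathcal{M}(N,S,d))$, and to track how its norm grows as $\delta\to 0$. First I would fix $\varepsilon>0$, pick $\phi\in\mathcal{M}(N,S,d)$ with $d_{KL}(\pi,\phi)\le d_{KL}(\pi,\mathcal{M}(N,S,d))+\varepsilon$, and take the associated sequence $f_\delta$ satisfying equation (15); by the upper bound in (15), $d_{KL}(\phi,f_\delta)\le\delta$ (a short computation bounding the $KL$ by the max log-ratio), hence $d_{KL}(\pi,f_\delta)\le d_{KL}(\pi,\mathcal{M}(N,S,d))+\varepsilon+O(\delta)$. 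The key quantitative input is Lemma~4: it gives $\max_i f_\delta(t_{1:S})_i-\min_i f_\delta(t_{1:S})_i = O(\log(1/\delta))$, and since logits only matter up to an additive constant, we may recenter $f_\delta$ so that $\|f_\delta(t_{1:S})\|_2 = O(\log(1/\delta))$ uniformly in $t_{1:S}$.

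Next I would run the construction from the proof of Theorem 1 verbatim on $f_\delta$ in place of $f_{W,E}$: absorb the skip connection into a "fake" attention head of inner dimension $d$ using $W_{QK}^0=\lambda I_d$, take $H=\lceil(T_\varepsilon-d)/d_h\rceil$ heads, apply Lemma~2 to make $A'$ have rank $T_\varepsilon$ on the $T_\varepsilon$ most likely sequences, use the SVD $W_VA'=UI_\Sigma P_1 V$ and set $W_O = E_\delta P_1^T(P_1VP_1^T)^{-1}I_\Sigma^{-1}U^T$, $W_U=W_\delta$, where $W_\delta,E_\delta$ is any factorization $f_\delta = W_\delta E_\delta$ through $\mathbb{R}^d$ with $\|W_\delta E_\delta\|_2 = O(\log(1/\delta))$. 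Exactly as in equations (13)–(14) of the Theorem~1 proof, this yields
\begin{equation*}
    |d_{KL}(\pi,\mathcal{T}^*) - d_{KL}(\pi,f_\delta)| \le C\varepsilon\|W_\delta E_\delta\|_2 + o(e^{-\lambda\alpha}) = O\!\left(\varepsilon\log\tfrac{1}{\delta}\right) + o(e^{-\lambda\alpha}).
\end{equation*}
Choosing $\delta=\varepsilon$, sending $\lambda\to+\infty$ to kill the $o(e^{-\lambda\alpha})$ term, and combining with the bound on $d_{KL}(\pi,f_\delta)$ gives $d_{KL}(\pi,\mathcal{T}^*)\le d_{KL}(\pi,\mathcal{M}(N,S,d)) + O(\varepsilon\log(1/\varepsilon))$, with the stated parameter count $d(S+2N+4d_hH)$ since the architecture is identical to that of Theorem~1.

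The main obstacle is controlling the constant $C$ in the error term: in Theorem~1, $C=\sqrt{1+\|(P_1VP_1^T)^{-1}P_1VP_2^T\|_2^2}$ depends on the singular vectors of $W_VA'$, which now may itself depend on $\delta$ through the choice of embeddings realizing the rank condition of Lemma~2. One must check that the embeddings $e,pos$ and the matrices $W_{QK}^h$ furnished by Lemma~2 can be chosen independently of $\delta$ (they only need to produce a rank-$T_\varepsilon$ family on the fixed set of $T_\varepsilon$ most likely sequences, which does not change with $\delta$), so that $V$, $P_1$, $P_2$ and hence $C$ are fixed; then $C$ is just an absolute constant absorbed into the $O(\cdot)$. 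A secondary point requiring care is that the recentering of $f_\delta$ is compatible with factoring through dimension $d$ — but adding a constant vector to all logits is $W_U$ times a rank-one update of $E$, which is harmless. Everything else is a direct transcription of the Theorem~1 argument, with $\|WE\|_2$ replaced by its $\delta$-dependent bound from Lemma~4.
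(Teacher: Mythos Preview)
Your proposal is correct and follows essentially the same route as the paper's proof: pick an approximable $\phi$ (the paper calls it $\pi_0$), take the associated $f_\delta$, rerun the Theorem~1 construction on $f_\delta$, use Lemma~4 to get the $O(\log(1/\delta))$ norm control, and set $\delta=\varepsilon$. Your observation that the embeddings and $W_{QK}^h$ from Lemma~2 depend only on the fixed set of $T_\varepsilon$ most likely sequences---and hence $C$ is independent of $\delta$---is exactly the point that makes the argument go through.

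Two minor differences worth flagging. First, the step ``$d_{KL}(\phi,f_\delta)\le\delta$, hence $d_{KL}(\pi,f_\delta)\le d_{KL}(\pi,\phi)+O(\delta)$'' is not a KL triangle inequality; what actually justifies it is the direct computation $|d_{KL}(\pi,f_\delta)-d_{KL}(\pi,\phi)|\le\mathbb{E}_{t_{1:S},t_{S+1}\sim\pi}[|\log(\phi/\mathrm{softmax}(f_\delta))|]\le\delta$ (the paper does exactly this). Second, rather than recentering $f_\delta$ to control $\|W_\delta E_\delta\|_2$, the paper keeps the operator $(I-\mathbb{1}\Pi_{t_{1:S}}^T)$ from equation~(13) unbounded and observes that $\|(I-\mathbb{1}\Pi_{t_{1:S}}^T)f_\delta(t_{1:S})\|_\infty\le\max_j f_\delta(t_{1:S})_j-\min_j f_\delta(t_{1:S})_j$, which feeds Lemma~4 directly without any factorization issue. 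Your recentering works too (replace $W_\delta$ by $(I-\tfrac{1}{N}\mathbb{1}\mathbb{1}^T)W_\delta$, not a rank-one update of $E$), but the paper's route is slightly cleaner.
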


\begin{proof}\textit{Theorem 3.}
    Let $\pi_0$ a distribution that is slowly approximable and $f_{\delta}$ a sequence of sequence encoders satisfying equation (15) for $c>0$ and $\alpha\geq 1$. First, using the right hand-side of (15), we have 
    \begin{equation}
        \begin{split}
            |d_{KL}(\pi, f_{\delta}) - d_{KL}(\pi, \pi_0)| &\leq \left|\mathbb{E}_{t_{1:S}, t_{S+1}}\left[\log\left(\frac{\pi_0(t_{S+1}|t_{1:S})}{\text{softmax}(f_{\delta}(t_{1:S}))_{t_{S+1}}}\right)\right]\right|\\
            & \leq \mathbb{E}_{t_{1:S}}\left[\sum_{t_{S+1}=1}^N\pi(t_{S+1}|t_{1:S})\left|\log\left(\frac{\pi_0(t_{S+1}|t_{1:S})}{\text{softmax}(f_{\delta}(t_{1:S}))_{i}}\right)\right|\right]\\
            &\leq \delta
        \end{split}
    \end{equation}
    since we apply (15) if $\pi(i|t_{1:S})>0$, and the bound is otherwise trivial. Now, we use the bound from Theorem 1 in equation (13) and (14), but we don't upper bound $||I_d - \mathbb{1}\Pi^T_{t_{1:S}}||_{2, \infty}$. 
    \begin{equation}
        \begin{split}
            |d_{KL}(\pi, \mathcal{T}) - d_{KL}(\pi, f_{\delta})| &\leq \varepsilon C \max_{t_{1:S}}||(I_d - \mathbb{1}\Pi^T_{t_{1:S}})f_{\delta}||_{\infty}\\
            &\leq \varepsilon C \max_{t_{1:S}} \max_{||x||_{\infty}=1} 
            \big\{
            \begin{split}
                \max_{t_{S+1}} f_{\delta}(x)_{t_{S+1}} - \mathbb{E}_{t_{S+1}}[f_{\delta}(x)_{t_{S+1}}]\\
                \mathbb{E}_{t_{S+1}}[f_{\delta}(x)_{t_{S+1}}] - \min_{t_{S+1}} f_{\delta}(x)_{t_{S+1}}
            \end{split}\\
            &\leq \varepsilon C \max_{||x||_{\infty}=1}(\max_{t_{S+1}} f_{\delta}(x)_{t_{S+1}} - \min_{t_{S+1}} f_{\delta}(x)_{t_{S+1}})\\
            &\leq \varepsilon C \max_{t_{1:S}}(\max_{t_{S+1}} f_{\delta}(t_{1:S})_{t_{S+1}} - \min_{t_{S+1}} f_{\delta}(t_{1:S})_{t_{S+1}})\\
            &\leq O\left(\varepsilon\log\left(\frac{1}{\delta}\right)\right)
        \end{split}
    \end{equation}
    by using Lemma 4 for the last row. We conclude the proof by assembling (17) and (18) with $\varepsilon = \delta$, and by taking the infimum over all polynomialy approximable distributions.
    \begin{equation}
        \begin{split}
            d_{KL}(\pi, \mathcal{T}^*) &= d_{KL}(\pi, \mathcal{T}^*) - d_{KL}(\pi, f_{\delta}) + d_{KL}(\pi, f_{\delta}) - d_{KL}(\pi, \pi_0)+d_{KL}(\pi, \pi_0)\\
            &\leq O\left(\varepsilon\log\left(\frac{1}{\varepsilon}\right)\right)+\varepsilon+d_{KL}(\pi, \pi_0)\\
            &\leq O\left(\varepsilon\log\left(\frac{1}{\varepsilon}\right)\right)+d_{KL}(\pi, \mathcal{M}(N,S,d))\\
        \end{split}
    \end{equation}
\end{proof}

\begin{proof}
    \textit{Lemma 4.} Let $\pi$, $c>0$, $\alpha\geq 1$ and $f_{\delta}$ satisfying (15). Let $t_{1:S}$ and $t_{S+1}$ such that $\pi(t_{S+1}|t_{1:S})>0$. We can write the left hand-side of (15) as:
    \begin{equation}
        \frac{e^{f_{\delta}(t_{1:S})_{t_{S+1}}}}{\sum_je^{f_{\delta}(t_{1:S})_{j}}} \leq \pi(t_{S+1}|t_{1:S})e^{-c\delta^{\alpha}}\text{ or }\frac{e^{f_{\delta}(t_{1:S})_{t_{S+1}}}}{\sum_je^{f_{\delta}(t_{1:S})_{j}}} \geq \pi(t_{S+1}|t_{1:S})e^{c\delta^{\alpha}}
    \end{equation}
    Suppose that all choice of $t_{S+1}$ end up in the right hand-side of (20), then, by summing this inequality for all $t_{S+1}$ such that $\pi(t_{S+1}|t_{1:S})>0$, we get \[1\geq \sum_{k}\frac{e^{f_{\delta}(t_{1:S})_{k}}}{\sum_je^{f_{\delta}(t_{1:S})_{j}}} \geq e^{c\delta^{\alpha}} > 1\] which is a contraction. Thus, there exist a token $t_{S+1}$ such that the left hand-side is true. We now keep this token for the rest of the proof.
    \noindent Since $\frac{e^{f_{\delta}(t_{1:S})_{t_{S+1}}}}{\sum_je^{f_{\delta}(t_{1:S})_{j}}} \underset{\delta\rightarrow 0}{\longrightarrow} \pi(t_{S+1}|t_{1:S})>0$, then $e^{f_{\delta}(t_{1:S})_{j}-f_{\delta}(t_{1:S})_{t_{S+1}}} = O(1)$ for any other $j$ satisfying $\pi(j|t_{1:S})>0$. This is because, all the logit that have non-zero limit probability after renormalization must grow at the same speed. It means that $f_{\delta}(t_{1:S})_{t_{S+1}}-f_{\delta}(t_{1:S})_{j} = \log(\pi(t_{S+1}|t_{1:S})) - \log(\pi(j|t_{1:S})) + o(1)$. One can do the same reasoning on the logits $f_{\delta}(t_{1:S}) - \min_jf_{\delta}(t_{1:S})_{j}$, meaning that all logits are positive. In this case, when $\pi(j|t_{1:S})=0$, we have at the limit that $0\leq \frac{f_{\delta}(t_{1:S})_j - \min f_{\delta}(t_{1:S})}{f_{\delta}(t_{1:S})_{t_{S+1}}- \min f_{\delta}(t_{1:S})} < 1$ since the exponentials need in these cases to grow slower than the one of $t_{S+1}$. We denote this limit $\alpha_j$. We can now write again the inequality (20),
    \begin{equation}
        \begin{split}
            \pi(t_{S+1}|t_{1:S})e^{-\frac{\delta}{c}} &\geq \frac{e^{f_{\delta}(t_{1:S})_{t_{S+1}}}}{\sum_je^{f_{\delta}(t_{1:S})_{j}}} \\
            & = \frac{\pi(t_{S+1}|t_{1:S})e^{o(1)}}{\sum_{\pi(j|t_{1:S})>0}\pi(j|t_{1:S})e^{o(1)} + \sum_{\pi(j|t_{1:S})>0}e^{(\alpha_j-1)(f_{\delta}(t_{1:S})_{t_{S+1}} - \min f_{\delta}(t_{1:S}))(1+o(1))}}\\
            &\geq \frac{\pi(t_{S+1}|t_{1:S}) + o(1)}{1 + o(1) + (N-1)e^{(\max_j\alpha_j-1)(\max f_{\delta}(t_{1:S}) - \min f_{\delta}(t_{1:S}))(1+o(1))}}
        \end{split}
    \end{equation}
    And finally, equation (22) for all tokens sequences gives the bound \[\max_{t_{1:S}}(\max f_{\delta}(t_{1:S}) - \min f_{\delta}(t_{1:S})) \leq \frac{\log\left(e^{\frac{\delta}{c}}-1\right) - \log(N-1)}{1-\max_j\alpha_j}(1 + o(1)) = O\left(\log\left(\frac{1}{\delta}\right)\right)\]
\end{proof}
    
\subsection{Other results and proofs}

\begin{proof}
    \label{p-p2}
    \textit{Proposition 3:}
    Since we are under Assumption 1, there exist $g:[N]^S\rightarrow[N]$ such that $\pi(g(t_{1:S})|t_{1:S}) = 1$. Let $W^t_{1:S} = \left(\cos\left(\frac{2\pi i}{N}\right), \sin\left(\frac{2\pi i}{N}\right)\right)$, and $E(t_{1:S}) = W^T_{g(t_{1:S})}$. Let $f_{\lambda} = \lambda WE$. Since $\pi$ has 0 entropy for any of its conditional distribution, we have
    \begin{equation*}
        \begin{split}
            d_{KL}(\pi, f_{\lambda}) &= \mathbb{E}_{t_{1:S}}\left[\log\left(\sum_{j}e^{f_{\lambda}(t_{1:S})_j - f_{\lambda}(t_{1:S})_{g(t_{1:S})}}\right)\right]\\
            &= \mathbb{E}_{t_{1:S}}\left[\log\left(1+\sum_{j\neq g(t_{1:S})}e^{\lambda\cos\left(\frac{2\pi}{N}(j-g(t_{1:S}))\right) - \lambda}\right)\right]\\
            &= \mathbb{E}_{t_{1:S}}\left[\log\left(1+\sum_{j\neq g(t_{1:S})}e^{-\lambda\left(1-\cos\left(\frac{2\pi}{N}(j-g(t_{1:S}))\right)\right)}\right)\right]\\
        \end{split}
    \end{equation*}
    using trigonometric properties. Thus, we have that, $d_{KL}(\pi, f_{\lambda})\underset{\lambda\rightarrow +\infty}{\rightarrow} 0$. 
\end{proof}

\begin{proof}
    \noindent \textit{Lemma 1:}
    Under Assumption 2, for $\varepsilon, \delta>0$, we take $f_{\varepsilon}$ such that \[|d_{KL}(\pi, f_{\varepsilon}) - d_{KL}(\pi, \mathcal{L}(N,S,d))|\leq \varepsilon .\] We can write the new bound \[|d_{KL}(\pi, f_{\varepsilon}) - d_{KL}(\pi, f_{\delta})| = | \mathbb{E}_{t_{1:S},t_{S+1}}[\log(\text{softmax}(f_{\varepsilon}(t_{1:S}))_{t_{S+1}}) - \log(\text{softmax}(f_{\delta}(t_{1:S}))_{t_{S+1}})]| \leq \varepsilon + \delta\]
    Since every probability $\pi(t_{1:S}, t_{S+1}) >0$, we have that for every $t_{1:S}, t_{S+1}$, $\text{softmax}(f_{\varepsilon}(t_{1:S}))_{t_{S+1}})$ is bounded away from 0 and 1 for all $\varepsilon$. This implies that  $f_{\varepsilon}$ is bounded after centering it. So we can extract a converging subsequence from that centered sequence. To finish the proof, we simply need to show the closeness of $\mathcal{L}(N,S,d)$, but this is easily implied by the continuity of the constraint $\text{rank}(WE) \leq d$, which defines this set.
\end{proof}

\begin{proof}
    \noindent \textit{Lemma 3:}
    Let $n\in\mathbb{N}^*$. For each $P\in\mathbb{Z}[X_1, ..., X_n]^*$, $Ker(P) = \{x\in\mathbb{R}^n, P(x)=0\}$ has 0 Lebesgue measure. This can be shown by taking $X\sim\mathcal{N}(0, I_d)$ who has density with respect to the Lebesgue measure and seeing that $P(X)$ still has density by standard properties of $\mathrm{iid}$ gaussian sum and product. This means that $P(X)=0$ has probability 0, making $Ker(P)$ of measure 0. Now, since $\mathbb{Z}[X_1, ..., X_n]^*$ is countable, $\bigcup_{P\in\mathbb{Z}[X_1, ..., X_n]^*}Ker(P)$ still has measure 0. Thus, there exist $x\in\bigcap_{P\in\mathbb{Z}[X_1, ..., X_n]^*}\Bar{Ker(P)}$. Now, these numbers can be chosen positive since the proof works the same way with the quadrant $\mathbb{R}_+^n$.
\end{proof}

\section{Experiments}
\label{appendix:experiments}
We give here details on the experiments in section \ref{section:experiments}. The training procedure for every model was the following: we generate a distribution $\pi$ over 3 tokens, uniform over the first 2, and then one next-token was randomly chosen. This way, $\pi$ satisfies Assumption 1. From $\pi$, we generated $2^6$ batches of $2^{10}$ elements each, and trained the model for $2^6$ epochs. We used Adam with default parameters and a learning rate linearly decreasing from $0.1$ to $0.05$. Each model was trained five times on different seeds, and the accuracies were averaged. \href{https://github.com/leodana2000/Transformer_Attentional_Memory}{This repo} contains the code to reproduce the experiments. All experiments are averages of 5 runs, and were done on a single MacBook Air with an M2 chip and 16 Go of memory, taking about $15$ hours total.

\begin{figure}[t]
    \label{exp_6}
    \centering
    \begin{subfigure}{0.55\textwidth}
        \includegraphics[width=\linewidth]{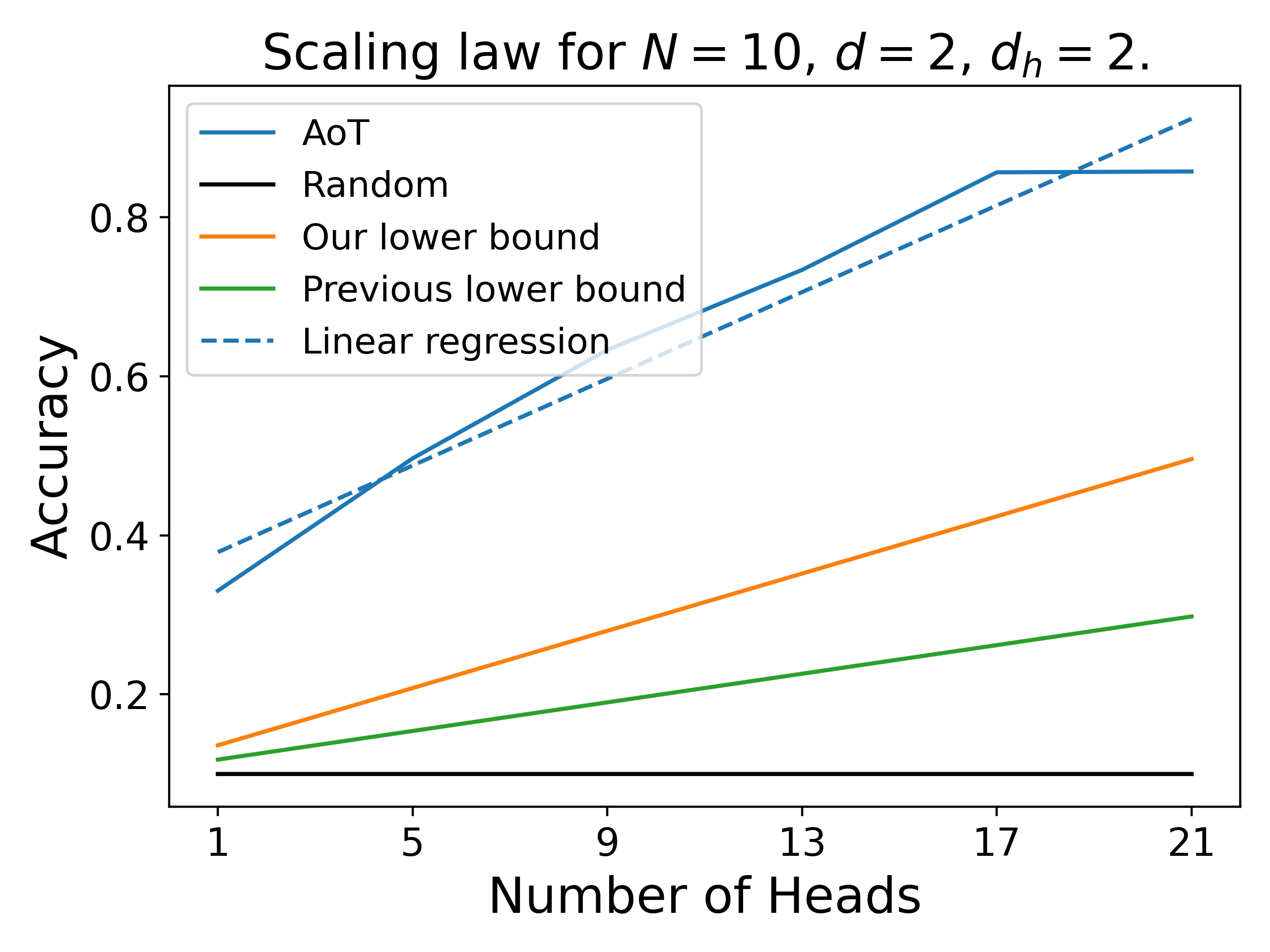}
    \end{subfigure}
    \caption{Experiment 6. Scaling law for $N=10$, with the smallest dimension and head dimension possible $d=d_h=2$. We observe that our lower bound largely underestimates the memorization capacity of a single attention head, but that the scalings are almost identical, with a slope relative difference of $1.3$.}
\end{figure}

In the rest of the appendix, we explain experiment 6 which corresponds to Figure \hyperref[exp_6]{4}, and we present altervatives to experiments 1, 2 and 5 with larger dimensions, or more depth.

As explained earlier, we present here another scaling laws whose intend is to be a fair comparison with our Corollary 2. Exceptionally, we used $N=10$ to avoid training issues. We train an AoT with $d=2$, $d_h=5$ and $H$ from 1 to 20. Corollary 2 states that with $H=20$, the model should be able to obtain exactly 1 accuracy. Now, how far is Corollary 2 from the empirical scaling ? Figure 4 shows that the true scaling at $d=2$ seems to be around $1.3Hd_h$.

\subsection{Larger embedding dimension}

We use $N=200$ dictionnary size and well as $d=50$ to corroborate the experiments in section \ref{section:experiments}. We found that experiments 1 and 5 show similar results. Experiment 2, however, differs and shows a linear scaling of $d_h$ instead of a quadratic as previously found. We did not choose to investigate the reasons behind the finding, but this could be explained by optimization problems in lower dimensions, making it seem like a quadratic scaling. 

\begin{figure}
    \centering
    \includegraphics[width=0.7\linewidth]{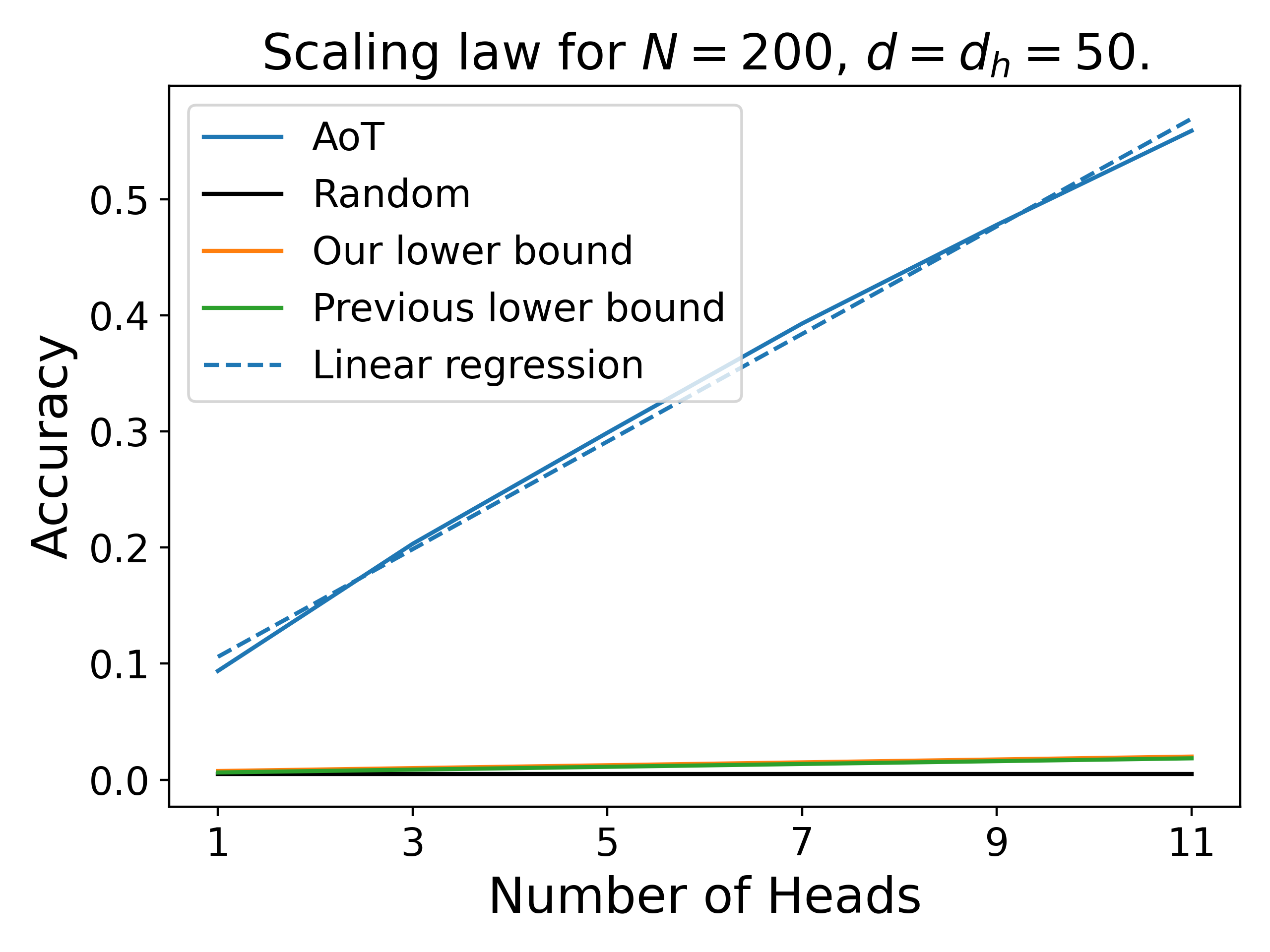}
    \caption{Experiment 1, larger dimension. We plotted the scaling law as in experiment 1, but with $N=200$, and $d=50$. As expected, the scaling remains linear.}
\end{figure}

\begin{figure}
    \centering
    \includegraphics[width=0.7\linewidth]{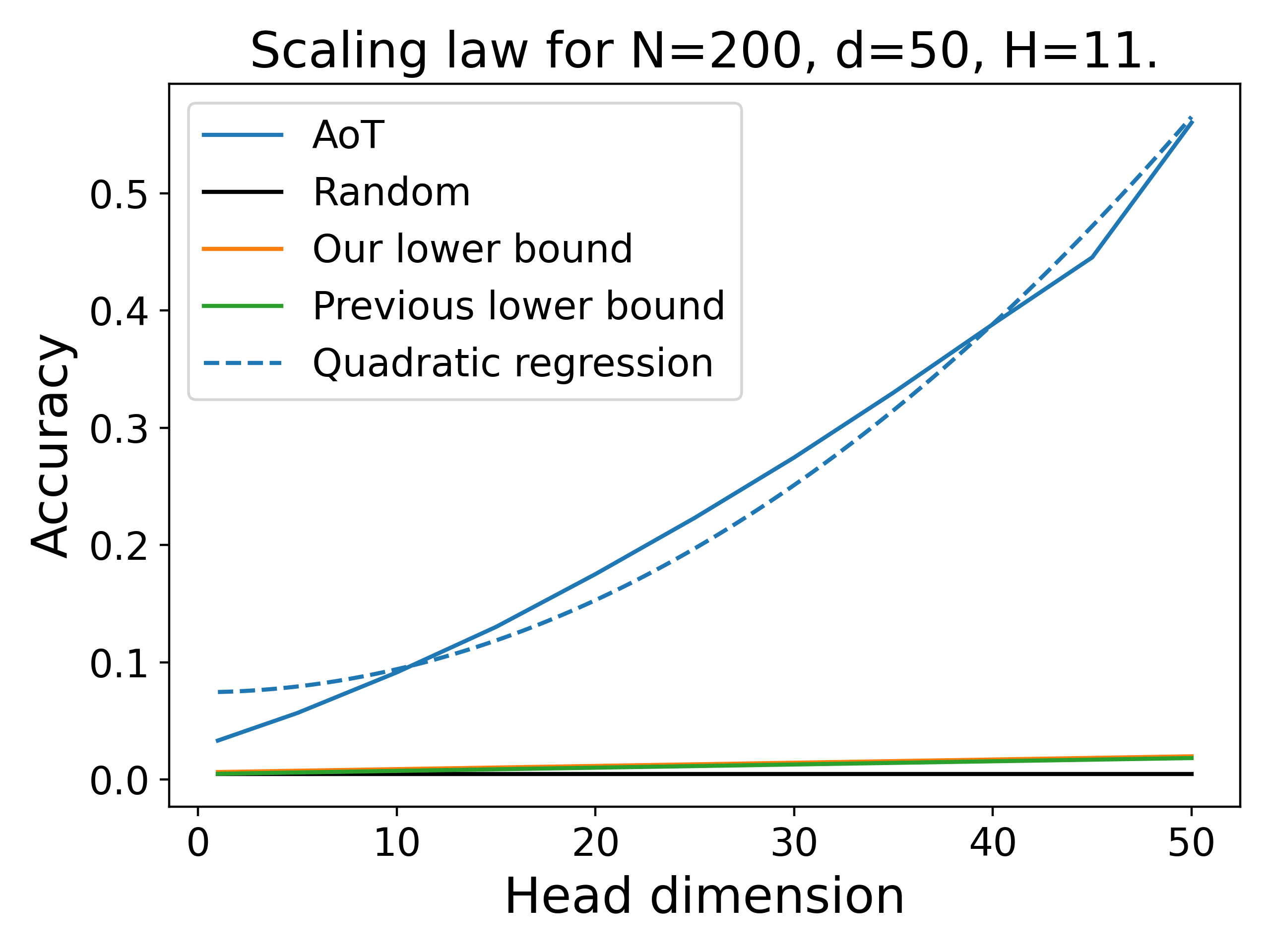}
    \caption{Experiment 2, larger dimension. We plotted the scaling law as in experiment 2, but with $N=200$, and $d=50$. The scaling here is linear. This contradicts the initial scaling laws from experiment 2, making the overall sclaing from experiment 4 quadratic instead of cubic.}
\end{figure}

\begin{figure}
    \centering
    \includegraphics[width=0.7\linewidth]{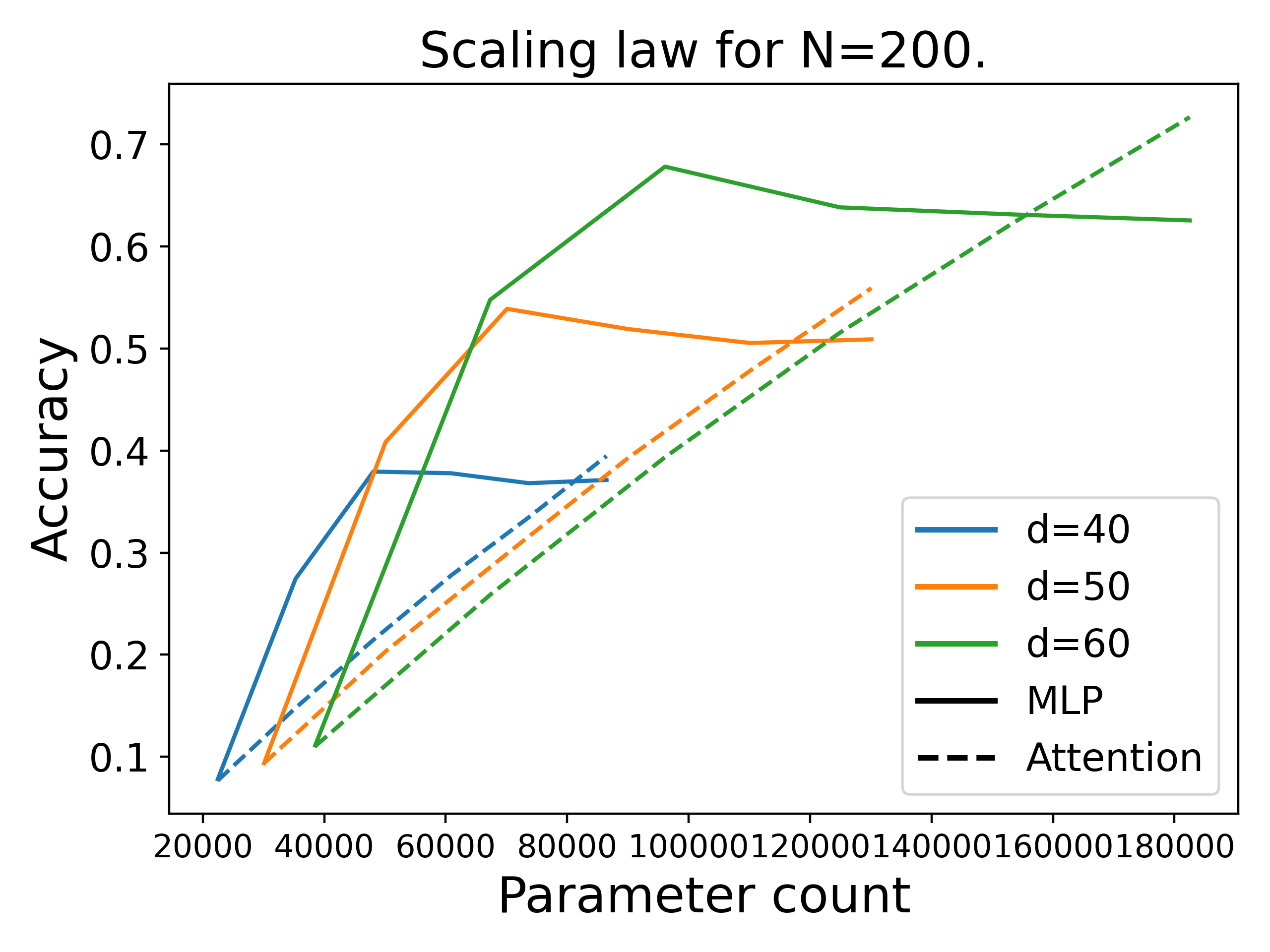}
    \caption{Experiment 5, larger dimension. We plotted the scaling law as in experiment 5, but with $N=200$, and $d=50$. The same phenomenon appears, were the MLPs' memorization capacity is better, yet it is harder to optimize than AoT.}
\end{figure}

\subsection{Deeper Transformers}

We used here AoT and MLP-based Transformer with 5 layers instead of 1, to corroborate the experiments performed in section \ref{section:experiments}. We find similar results for experiments 1, 2 and 5.

\begin{figure}
    \centering
    \includegraphics[width=0.7\linewidth]{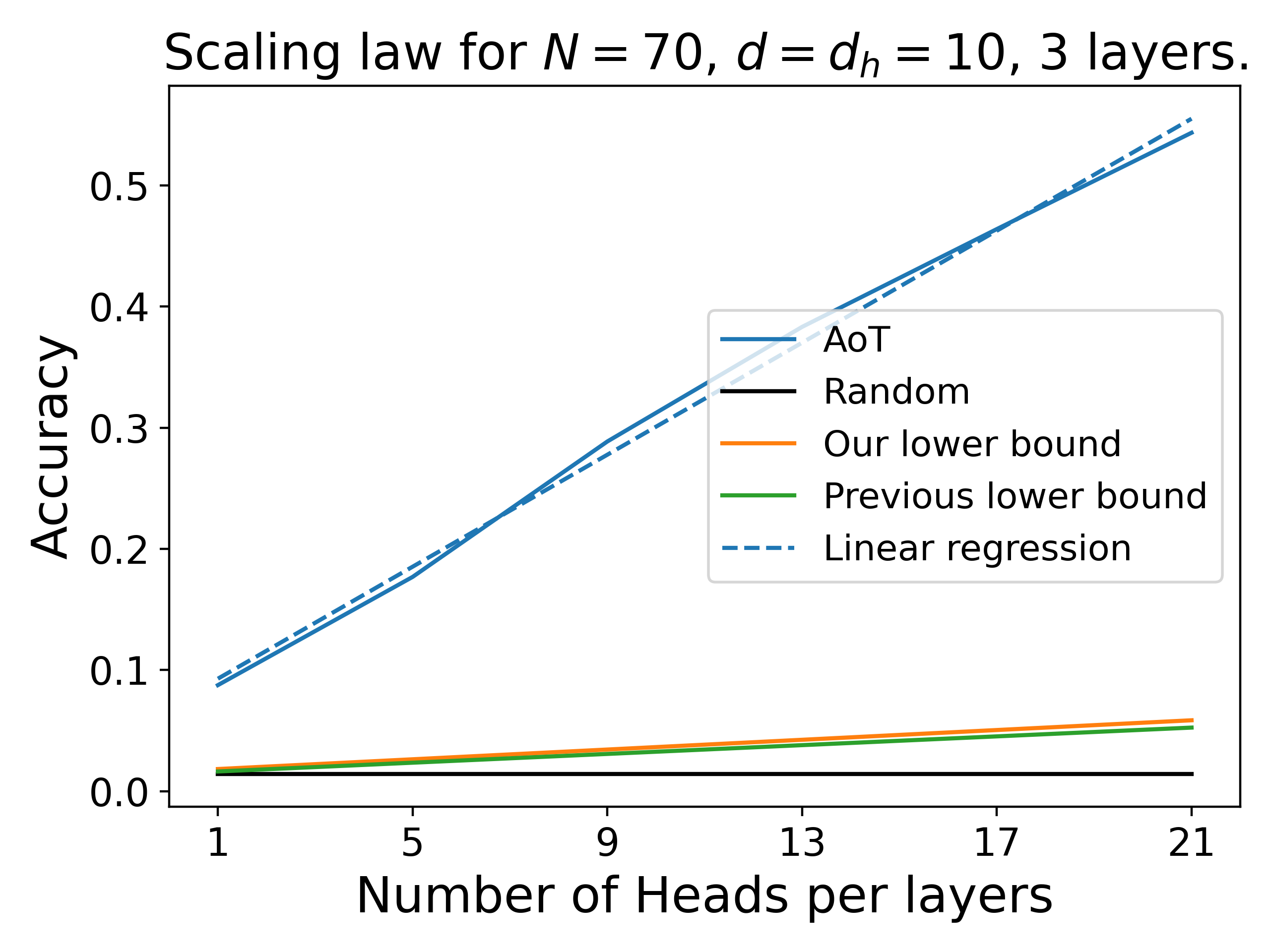}
    \caption{Experiment 1, deeper network. We plotted the scaling law as in experiment 1, but with 5 layers instead. As expected, the scaling is linear.}
\end{figure}

\begin{figure}
    \centering
    \includegraphics[width=0.7\linewidth]{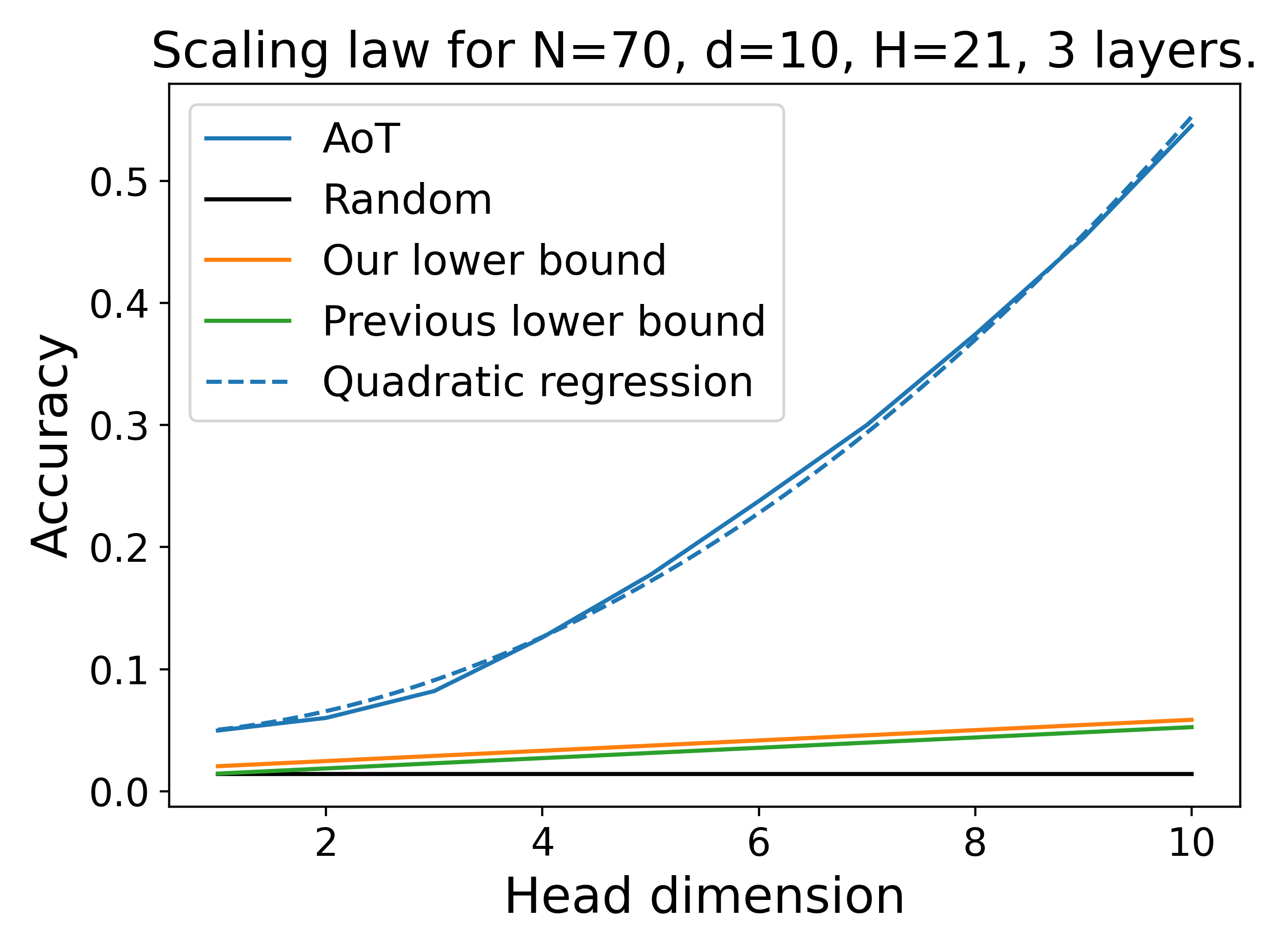}
    \caption{Experiment 2, deeper network. We plotted the scaling law as in experiment 2, but with 5 layers instead. As in experiment 2, the scaling is quadratic.}
\end{figure}

\begin{figure}
    \centering
    \includegraphics[width=0.7\linewidth]{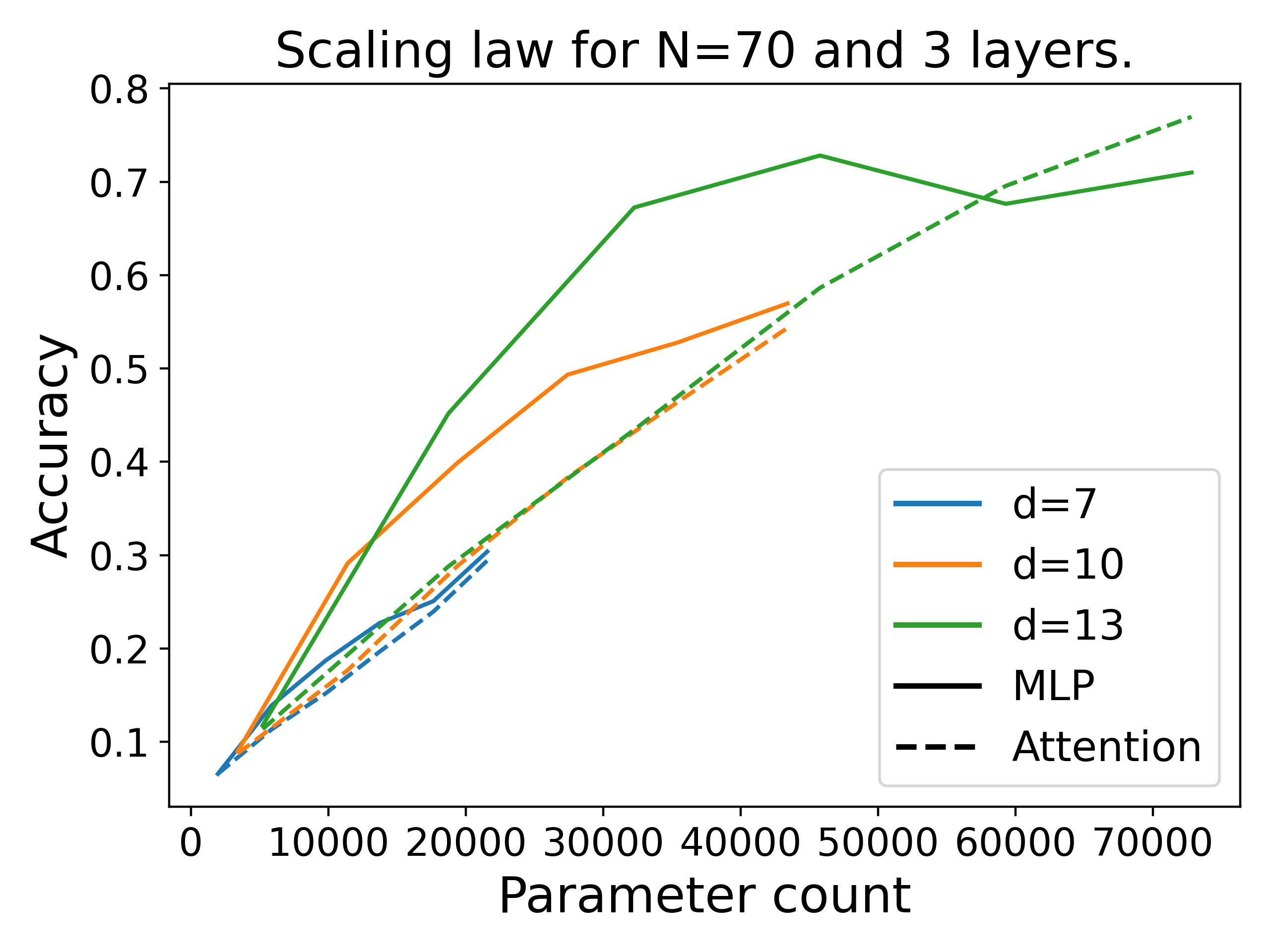}
    \caption{Experiment 5, deeper network. We plotted the scaling law as in experiment 5, but with 5 layers instead. As in experiment 5, the scaling is better but has optimization problem.}
\end{figure}
\end{document}